\newtheorem{prop}{Proposition}
\newtheorem{theorem}{Theorem}
\DeclareMathOperator*{\argmax}{arg\,max}
\title{A Review of Learning with Deep Generative Models from Perspective of Graphical Modeling}
\author{Zhijian Ou \\
	Tsinghua University, Beijing, China.\\
	\texttt{ozj@tsinghua.edu.cn} \\
	\today
}
\begin{document}

\maketitle
	
%

\newcommand{\fix}{\marginpar{FIX}}
\newcommand{\new}{\marginpar{NEW}}


\begin{abstract}
	
This document aims to provide a review on learning with deep generative models (DGMs), which is an highly-active area in machine learning and more generally, artificial intelligence. 
This review is not meant to be a tutorial, but when necessary, we provide self-contained derivations for completeness.
This review has two features.
First, though there are different perspectives to classify DGMs, we choose to organize this review from the perspective of graphical modeling, because the learning methods for directed DGMs and undirected DGMs are fundamentally different.
Second, we differentiate model definitions from model learning algorithms, since different learning algorithms can be applied to solve the learning problem on the same model, and an algorithm can be applied to learn different models.
We thus separate model definition and model learning, with more emphasis on reviewing, differentiating and connecting different learning algorithms.
We also discuss promising future research directions.\footnote{This review is by no means comprehensive as the field is evolving rapidly. The authors apologize in advance for any missed papers and inaccuracies in descriptions. Corrections and comments are highly welcome.}

\end{abstract}

\section{Introduction}

Generative models can be used for compression, denoising, inpainting, synthesis, recognition, unsupervised feature learning, semi-supervised learning, and many other tasks.
Learning with probabilistic generative models is one of the core research problems in machine learning and more generally, artificial intelligence.

Generative modeling generally refers to the approach that defines the joint distribution of random variables in the studied phenomenon.
In terms of graphical modeling terminology \cite{koller2009probabilistic}, probabilistic models can be broadly classified into two classes - directed and undirected\footnote{An easy way to tell an undirected model from a directed model is that an undirected model involves the normalizing constant (also called the partition function in physics), while the directed model is self-normalized.}.
In the \textbf{directed} graphical models (also known as Bayesian networks),  the joint distribution is factorized into a product of local conditional density functions, whereas in the \textbf{undirected} graphical models (also known as Markov random fields or Markov networks) the joint density function is defined to be proportional to the product of local potential functions. Simply speaking, an easy way to tell an undirected model from a directed model is that an undirected model involves the normalizing constant (also called the partition function in physics), while the directed model is self-normalized.

In recent years, generative modeling techniques have been greatly advanced by inventing new models with new learning algorithms.
\textbf{Deep generative models (DGMs)} generally refer to models with multiple layers of stochastic or deterministic variables.
A large class of DGMs is defined by using multiple stochastic hidden layers, such as the early Sigmoid Belief Networks (SBNs) \cite{Neal1992ConnectionistLO,Saul1996}, Helmholtz machines (HMs) \cite{Hinton1995the,dayan1995helmholtz}, and recently Deep Belief Networks (DBNs) \cite{hinton2006a}, Deep Boltzmann Machines (DBMs) \cite{dbm}.
In a recent trend, another class of DGMs is defined by utilizing neural networks in model representation, especially using neural networks with multiple (deterministic) hidden layers, often called deep neural networks (DNNs).
The implementation is simple, which basically is to use DNNs to define the  conditional densities in directed models or the potentials in undirected models.
Note that compared to modeling with multiple deterministic layers, modeling with multiple stochastic layers presents much greater challenge for model learning and thus yields inferior performance.
This is observed in both directed and undirected models.

Before we dive into the low-level presentation, it is worthwhile to emphasize some high-level concepts in the content organization in this review.
\begin{itemize}	
\item It is clear that the division of directed models and undirected models and the division of using deterministic and stochastic layers are orthogonal perspectives to specify DGMs.
In the following, we choose to organize our review of learning DGMs with respect to the first perspective, because the learning methods for directed DGMs and undirected DGMs are fundamentally different.

\item It is important to differentiate model definitions from model learning algorithms, since different learning algorithms can be applied to solve the learning problem on the same model, and an algorithm can be applied to learn different models.
However, many branding ``models'' in the literature are a combination of the model definition and the model learning. For example, the combination of the particular model definition through DNNs and the variational learning method together is referred to as a variational autoencoder (VAE) \cite{kingma2014auto-encoding,Kingma2014SemiSupervisedLW}.
So we separate model definition and model learning, with more emphasis on reviewing, differentiating and connecting different learning algorithms.

\end{itemize}

\cite{frey2005comparison} provides a good introduction to the classic inference and learning algorithms in graphical models, prior to the proliferation of deep learning.
There exist several excellent reviews related to DGMs.
The Chapter 20 in the book \cite{DLbook} provides an excellent introduction of DGMs, but organizes around the ``models'', which is different from this review.
\cite{Bengio2013RepresentationLA} introduce DGMs, both directed and undirected, in the context of representation learning, but does not cover the recent development, such as VAEs and GANs.
\cite{Hu2017OnUD} mainly presents a unified view of GANs and VAEs and links them to the wake-sleep algorithm, but does not cover undirected models.
Therefore, this review has its own contribution in distinctive content organization and up-to-date introduction.

\textbf{Main symbols}.

$x$ : the observation variable.

$h$ : the hidden variable (or say latent code);

$y$ : the class label;

$p_0(\cdot)$ : the (unknown) true density;

$\tilde{p}(\cdot)$ : the empirical density;

$p_{\theta}(\cdot)$ : the (target) model density with parameter $\theta$;

$q_{\phi}(\cdot)$ : the auxiliary density introduced in training with parameter $\phi$;

$D_{\psi}(\cdot), V_{\psi}(\cdot)$, etc. : some auxiliary functions introduced in training with parameter $\psi$. $D_{\psi}(x)$ denotes the discrminator (with sigmoid output) in adversarial learning.
$V_\psi(x) = log \frac{D_\psi(x)}{1-D_\psi(x)}$ is the logit of $D_\psi(x)$, and reversely $D_\psi(x) = \frac{1}{1+e^{-V_\psi(x)}} = \sigma( V_\psi(x) )$.

$\sigma(v) \triangleq \frac{1}{1+e^{-v}} $ denotes the sigmoid function.

$\mathcal{L}(x;\theta,\phi)$ : the variational lower bound (V-LB) or the evidence lower bound (ELBO) on the marginal log-likelihood $log p_\theta(x)$, defined in Eq. \ref{eq:VLB1}.
The V-LB pooled over the training data is denoted by $\mathcal{L}(\theta,\phi) \triangleq \sum_{k=1}^{n} \mathcal{L}(x_k;\theta,\phi)$.

$\mathcal{I}_K(x;\theta,\phi)$ : the importance-weighting lower bound (IW-LB) on the marginal log-likelihood $log p_\theta(x)$, using $K$ samples, defined in Eq. \ref{eq:IW-LB}.
Note that the IW-LB in the special case of $K=1$ is equal to the standard V-LB: $\mathcal{I}_1(x;\theta,\phi) = \mathcal{L}(x;\theta,\phi)$.
The IW-LB pooled over the training data is denoted by $\mathcal{I}_K(\theta,\phi) \triangleq \sum_{k=1}^{n} \mathcal{I}_K(x_k;\theta,\phi)$.

$\mathcal{F}_f(\theta,\psi)$ : the objective function in learning $f$-GANs, which, as defined in Eq. \ref{eq:f-gan-obj2}, is a variational lower bound on the $f$-divergence $D_f\left[p_0 || p_\theta\right]$. The subscript $f$ denotes the $f$-function used in defining the $f$-divergence.

$\mathcal{J}(\hat\theta)$ : the objective function in NCE learning, as defined in Eq \ref{eq:NCE-obj}. $\hat\theta = (\theta, c)$ denotes the new parameter vector, which consists of the log normalization constant $c$ as a parameter.


\textbf{Notations}.

For any generic sequence $\left\lbrace z_n\right\rbrace $ we shall use $z_{i:j}$ to denote $z_i, z_{i+1}, \cdots\, z_j$.
Similarly, wherever a collection of indices appears in the subscript, we refer to the corresponding collection of indexed variables, e.g. $c_{l,1:H} \triangleq \left\lbrace  c_{l,1}, c_{l,2}, \cdots\, c_{l,H} \right\rbrace$.

The entropy is defined as $H[q] \triangleq - \int q log q$.

The inclusive KL-divergence between two distributions $p(\cdot)$ and $q(\cdot)$ is defined as $KL[p||q] \triangleq \int p log \left( \frac{p}{q} \right) $, which by default is called the KL-divergence, and is sometimes referred to as the forward KL-divergence.

The exclusive KL-divergence is defined as $KL[q||p] \triangleq \int q log \left( \frac{q}{p} \right) $, which is also referred to as the reverse KL-divergence.

The Jensen-Shannon divergence is defined as
\begin{displaymath}
JS[p||q] \triangleq \frac{1}{2}KL\left[p||\frac{p+q}{2}\right] + \frac{1}{2}KL\left[q||\frac{p+q}{2}\right]
\end{displaymath}

The symmetric KL divergence is defined as
\begin{displaymath}
KL_{sym}[p||q] \triangleq \frac{KL(p||q)+KL(q||p)}{2}
\end{displaymath}

\textbf{Mathematical Properties}.

Formally, for any density function $p_X(x;\theta)$, the partial derivative w.r.t. $\theta$ of the logarithm of the density function, $\frac{\partial}{\partial \theta} log p_X(x;\theta)$, is called the ``score''. Under certain regularity conditions, the expectation of the score w.r.t. the density itself is 0. This formula is often referred in presenting Fisher information\footnote{\url{https://en.wikipedia.org/wiki/Fisher_information}}, so we call it \textbf{Fisher Equality}, which, as we will show, is frequently used in this review.
\begin{displaymath}
E_{p_X(x;\theta)}\left[  \frac{\partial}{\partial \theta} log p_X(x;\theta) \right] = 0.
\end{displaymath}

\section{Background}

\begin{algorithm}[tb]
	\caption{The general stochastic approximation (SA) algorithm}\label{alg:SA}
	\begin{algorithmic}	
		\FOR {$t=1,2,\cdots$}
		\STATE
		\begin{enumerate}
			\item Draw a sample $z^{(t)}$ with a Markov transition kernel $K_{\lambda^{(t-1)}}(\cdot|z^{(t-1)})$, which starts with $z^{(t-1)}$ and admits $p(\cdot; \lambda^{(t-1)})$ as the invariant distribution.
			\item Set $\lambda^{(t)} = \lambda^{(t-1)} + \gamma_t F(z^{(t)};\lambda^{(t-1)}) $, where $\gamma_t$ is the learning rate.
		\end{enumerate}	
		\ENDFOR
	\end{algorithmic}
\end{algorithm}

\begin{algorithm}[tb]
	\caption{SA with multiple moves}\label{alg:SA-multiple-move}
	\begin{algorithmic}	
		\FOR {$t=1,2,\cdots$}
		\STATE
		\begin{enumerate}
	\item Set $z^{(t,0)}=z^{(t-1,K)}$.
	For $k$ from $1$ to $K$,
	generate $z^{(t,k)} \sim K_{\lambda^{(t-1)}}(\cdot|z^{(t, k-1)})$,
	where $K_{\lambda^{(t-1)}}(\cdot|\cdot)$ is a Markov transition kernel that admits $p(\cdot; \lambda^{(t-1)})$ as the invariant distribution.
	\item Set $\lambda^{(t)} = \lambda^{(t-1)} + \gamma_t \{ \frac{1}{K} \sum_{z\in B^{(t)}} F(z;\lambda^{(t-1)}) \}$,  where $B^{(t)} = \{ z^{(t,k)} | k = 1,\cdots,K \}$.
		\end{enumerate}	
		\ENDFOR
	\end{algorithmic}
\end{algorithm}

We begin this review with a brief introduction to \textbf{stochastic approximation (SA)} \citep{SA51}, which lays the mathematical foundation for many recent learning methods.
SA methods are an important family of iterative stochastic optimization algorithms, introduced in \cite{SA51}.
There has since been a vast literature on theory, methods, and applications of SA \cite{benveniste2012adaptive,chen2002stochastic}.
In fact, the widely used variational learning and adversarial learning are applications of the SA methodology.

Basically, stochastic approximation provides a mathematical framework for stochastically solving a root finding problem, which has the form of expectations being equal to zeros.
Suppose that the objective is to find the solution $\lambda^*$ of $f(\lambda) = 0$ with
\begin{equation}
\label{eq:SA}
f(\lambda) = E_{z \sim p(\cdot; \lambda) } [ F(z;\lambda) ],
\end{equation}
where $\lambda$ is a $d$-dimensional parameter vector in $\Lambda \subset R^d$, and $z$ is an observation from a probability distribution $p(\cdot; \lambda)$ depending on $\lambda$.
$F(z;\lambda) \in R^d $ is a function of $z$, providing $d$-dimensional noisy measurements of $f(\lambda)$.
Intuitively, we solve a system of simultaneous equations, $f(\lambda) = 0$, which consists of $d$ constraints, for determining $d$-dimensional $\lambda$.

Given some initialization $\lambda^{(0)}$ and $z^{(0)}$, a general SA algorithm iterates as shown in Algorithm \ref{alg:SA} \citep{song2014weak}.
During each SA iteration, it is possible to generate a set of multiple observations $z$ by performing the Markov transition repeatedly 
and then use the average of the corresponding values of $F(z;\lambda)$ for updating $\lambda$, which is known as SA with multiple moves as shown in Algorithm \ref{alg:SA-multiple-move} \citep{Wang2017LearningTR}.
This technique can help reduce the fluctuation due to slow-mixing of Markov transitions.

The convergence of SA has been studied under various regularity conditions \cite{benveniste2012adaptive,chen2002stochastic}.
The sufficient conditions for the convergence are often expressed as the requirement for the learning rates (e.g. satisfying that $\sum_{t=0}^\infty \gamma_t = \infty$ and $\sum_{t=0}^\infty \gamma_t^2 < \infty$), together with a few mild technical requirements for $f(\lambda)$.
In practice, we can set a large learning rate at the early stage of learning and decrease to $1/t$ for convergence.
For completeness, we provide a short summary on the convergence of $\left\lbrace \lambda_t, t \ge 1\right\rbrace $ in Algorithm \ref{alg:SA}, based on Theorem 1 in \cite{song2014weak}.

\begin{theorem}
	\label{theorem:SA}
	Let $\left\lbrace \gamma_t\right\rbrace $ be a monotone nonincreasing sequence of positive numbers such that $\sum_{t=1}^\infty \gamma_t = \infty$ and $\sum_{t=1}^\infty \gamma_t^2 < \infty$. Assume that $\Lambda$ is compact and the Lyapunov condition on $f(\lambda)$ and the drift condition on the transition kernel $K_{\lambda}(\cdot|\cdot)$ hold. Then we have: $d(\lambda_t, \mathcal{L}) \to 0$ almost surely as $t \to \infty$, where $\mathcal{L}=\left\lbrace \lambda: f(\lambda)=0 \right\rbrace $ and $d(\lambda, \mathcal{L}) = \inf_{\lambda' \in \mathcal{L}} || \lambda - \lambda' ||$.
\end{theorem}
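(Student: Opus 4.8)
The plan is to follow the ODE (mean-field) method for stochastic approximation with Markovian dynamics: track the discrete iterates $\{\lambda_t\}$ by the deterministic flow $\dot\lambda = f(\lambda)$, and use the Lyapunov condition to force this flow—and hence the iterates—into the root set $\mathcal{L}$. First I would rewrite the update as
\begin{displaymath}
\lambda^{(t)} = \lambda^{(t-1)} + \gamma_t f(\lambda^{(t-1)}) + \gamma_t \big( F(z^{(t)};\lambda^{(t-1)}) - f(\lambda^{(t-1)}) \big),
\end{displaymath}
so that the ``signal'' $\gamma_t f(\lambda^{(t-1)})$ is the Euler discretization of the mean-field ODE and the remaining ``noise'' term must be shown to vanish in an appropriate accumulated sense.

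The central difficulty—and the reason the plain Robbins--Monro analysis does not apply—is that $z^{(t)}$ is produced by one step of the kernel $K_{\lambda^{(t-1)}}$, not drawn exactly from the invariant law $p(\cdot;\lambda^{(t-1)})$; hence the noise term is neither zero-mean nor a martingale difference. The key step I would use to handle this is the Poisson equation associated with $K_\lambda$: under the drift condition the kernel is uniformly ergodic enough that, for each fixed $\lambda$, there exists a solution $\hat{F}_\lambda$ to $\hat{F}_\lambda(z) - (K_\lambda \hat{F}_\lambda)(z) = F(z;\lambda) - f(\lambda)$ with controlled growth and Lipschitz-in-$\lambda$ regularity. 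Substituting this identity splits the noise into (i) a genuine martingale-difference sequence, (ii) a telescoping sum in $\hat{F}_{\lambda^{(t)}}(z^{(t)})$, and (iii) a remainder measuring the $\lambda$-variation of $\hat{F}_\lambda$ across one update.

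Then I would show each piece is controlled. The martingale part converges almost surely because its predictable quadratic variation is summable, using $\sum_t \gamma_t^2 < \infty$ together with the compactness of $\Lambda$ (which bounds $F$ and $\hat{F}_\lambda$); the telescoping and remainder terms are $O(\gamma_t)$ per step times a bounded factor, so their weighted sums are controlled by $\sum_t \gamma_t^2$ and by the summable increments $\sum_t |\gamma_{t+1}-\gamma_t|$, where the monotonicity of $\{\gamma_t\}$ is used via Abel summation. Consequently the accumulated noise vanishes and the interpolated trajectory becomes an asymptotic pseudo-trajectory of the flow $\dot\lambda = f(\lambda)$.

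Finally I would invoke the Lyapunov condition: with a Lyapunov function $V$ satisfying $\langle \nabla V(\lambda), f(\lambda)\rangle \le 0$ and vanishing only on $\mathcal{L}$, standard ODE-method arguments (a LaSalle-type invariance principle combined with the asymptotic-pseudo-trajectory property) give that every limit point of $\{\lambda_t\}$ lies in $\mathcal{L}$, i.e. $d(\lambda_t,\mathcal{L}) \to 0$ almost surely. Compactness of $\Lambda$ guarantees the existence of limit points and keeps all the above bounds uniform. I expect the Poisson-equation step—establishing the existence and $\lambda$-regularity of $\hat{F}_\lambda$ from the drift condition—to be the main obstacle, since it is precisely what converts the Markovian, state-dependent noise into a form to which martingale convergence applies.
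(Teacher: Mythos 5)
Your proposal cannot be matched against an in-paper argument, because the paper does not actually prove this theorem: it is stated explicitly as a summary ``based on Theorem 1 in \cite{song2014weak}'', i.e., the convergence result is imported from the stochastic approximation literature rather than derived. Judged on its own merits, your sketch correctly reproduces the standard argument that underlies such results (and the cited one in particular): the mean-field/ODE decomposition, the recognition that Markovian state-dependent noise is not a martingale difference, the Poisson-equation splitting into a martingale part, a telescoping part, and a $\lambda$-variation remainder, the use of $\sum_t \gamma_t^2 < \infty$ together with $\sum_t |\gamma_{t+1}-\gamma_t| < \infty$ (which follows from monotonicity, since the increments telescope to at most $\gamma_1$) to make the accumulated noise vanish, and the Lyapunov/LaSalle step to force limit points into $\mathcal{L}$. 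This is the right proof skeleton, and you correctly single out the genuinely hard step: establishing existence, boundedness, and $\lambda$-Lipschitz regularity of the Poisson solution $\hat{F}_\lambda$ from the drift condition --- that is precisely what the drift condition is for in \cite{song2014weak} and in the earlier Andrieu--Moulines--Priouret-type analyses, and a complete write-up would have to carry it out, including uniformity of the ergodicity constants over the compact $\Lambda$. One further point deserving care: Algorithm 1 as stated contains no projection or truncation step, so the assumption ``$\Lambda$ compact'' must be read as an assumption that the iterates remain in $\Lambda$ (or the update must be modified by a projection onto $\Lambda$); your argument implicitly relies on this when bounding $F$ and $\hat{F}_\lambda$ uniformly, and a full proof should state it explicitly.
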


Particularly, when $f(\lambda)$ corresponds to the gradient of some objective function, then under certain regularity conditions, 
$\lambda^{(t)}$ will converge to the optimal solution (if the objective function is convex) or a local optimum (if the objective not convex).

Perhaps the most familiar application of SA in machine learning literature is the Stochastic Gradient Descent (SGD) technique.
When the objective (and therefore its gradient) is a sum of many terms that can be computed independently, SGD samples one term at a time and follows one noisy estimate of the gradient with a decreasing step size.
Furthermore, it can be easily seen that SGD training with minibatches is an application of SA with multiple moves.

\begin{figure}[htb]
	\centering  
	\includegraphics[width=0.5\textwidth]{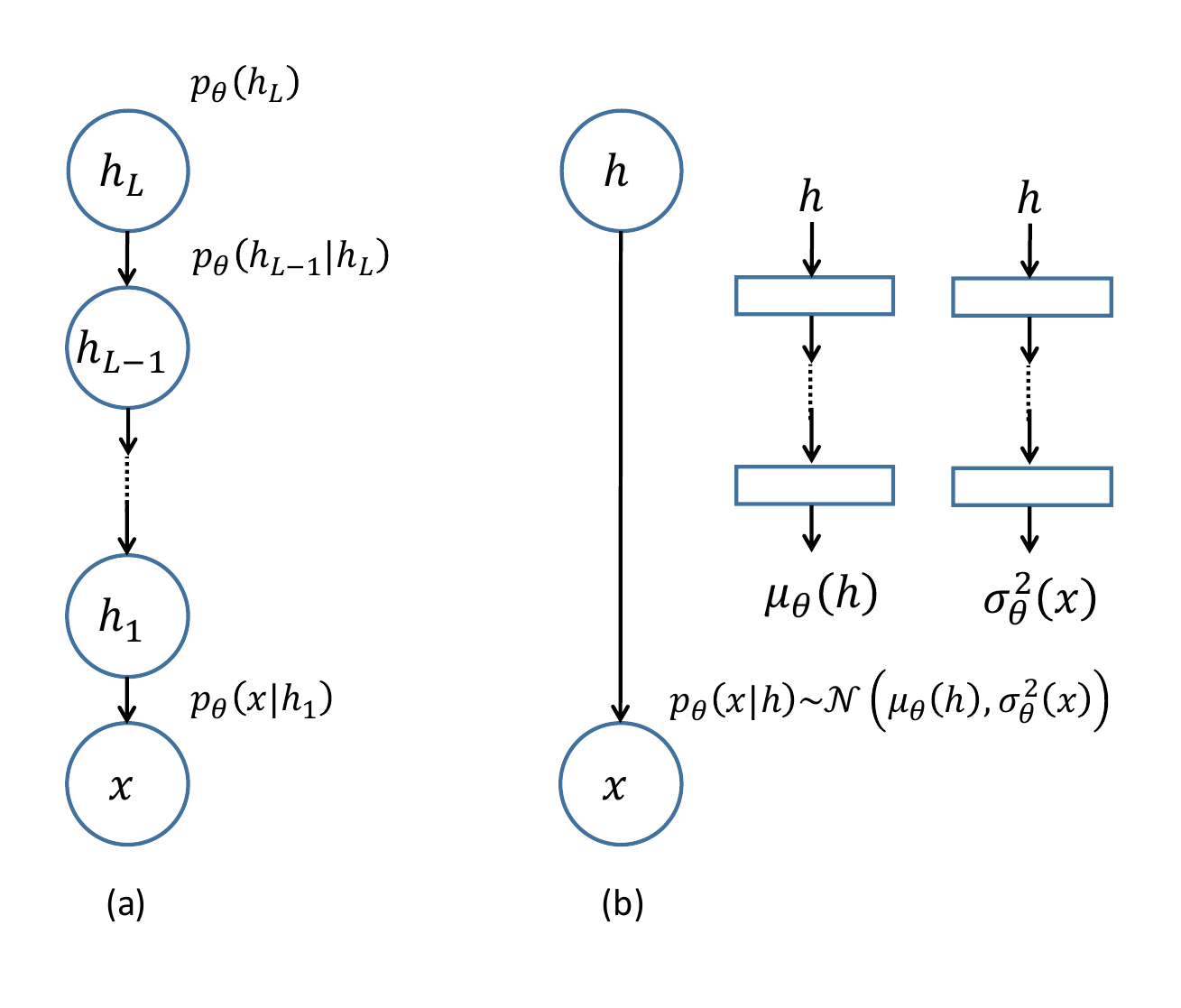}  
	\caption{Graphical model representation of different directed DGMs. Circles represent random variables, while rectangles represent deterministic layers in MLPs. (a) A DGM using multiple stochastic layers, e.g. SBNs, DLGMs. (b) A DGM using multiple deterministic layers but shallow stochastic connections, e.g. VAEs.} 
	\label{fig:directed-model}
\end{figure}  

\section{Learning with Deep Directed Models}

\subsection{Model Definition} \label{sec:directed-model-def}

First, it is useful to make a distinction between prescribed probabilistic models and implicit probabilistic models \cite{mohamed2016learning}.
\textbf{Prescribed models} are those that provide an explicit parametric specification of the distribution of the observed random variable $x$, specifying the likelihood function $p_{\theta}(x)$ with parameter $\theta$. 
Most models in machine learning are of this form. Examples include SBNs \cite{Neal1992ConnectionistLO,Saul1996}, Deep Latent Gaussian Models (DLGMs) \cite{Rezende2014StochasticBA}, VAEs \cite{kingma2014auto-encoding}.
Alternatively, \textbf{implicit models} are defined by a stochastic procedure (a simulation process) that generates data - we can sample data from its generative process, but we may not have access to calculate its density, and thus are referred to as likelihood-free.
A popular example of implicit models are GANs \cite{goodfellow2014generative}: samples from a simple distribution - such as uniform or Gaussian - are transformed nonlinearly and non-invertably by a DNN.

In the following, we roughly classify existing directed DGMs into four classes, on top of the division of prescribed models and implicit models.

\subsubsection{Prescribed models using deep stochastic layers}
\label{sec:model-def-SBN}
This class of models defines a generative process in terms of ancestral sampling through a cascade of hidden stochastic layers, as shown in Figure \ref{fig:directed-model}(a).
\begin{displaymath}
\label{eq:SBN}
p_{\theta}(x, h_{1:L}) = p_{\theta}(h_L) p_{\theta}(h_{L-1} | h_L) \cdots p_{\theta}(x | h_1)
\end{displaymath}
where $h_l$ $(l=1,\cdots\,L)$ are the stochastic hidden vectors.
There are two choices for $h_l$, being binary or real-valued.

\begin{itemize}
\item Like in SBNs, we can define $h_l \in  \left\lbrace 0,1\right\rbrace ^{H_l}$ to be binary hidden vectors, and implement $p_{\theta}(h_{l} | h_{l+1})$ as
\begin{equation} \label{eq:SBN-conditional}
\left\{
\begin{split}
p_{\theta}(h_{l} | h_{l+1}) &= \mathcal{FB}[h_{l} | c_{l,1:H}] \triangleq  \prod_{i=1}^{H_l}  \mathcal{B}[h_{l,i} | c_{l,i}], l=1,\cdots,L-1, \\
c_{l,1:H} &= T_l(h_{l+1}), \\
\end{split}
\right.
\end{equation}
where $\mathcal{B}[h_{l,i}|c_{l,i}]$ refers to the Bernoulli distribution with $p(h_{l,i} = 1) = c_{l,i}$, and $\mathcal{FB}[\cdot|\cdot]$ denotes a factorial Bernoulli distribution.
The transformations $T_l$ can be implemented by multi-layer perceptrons (MLPs), as long as the sigmoid activation functions are used at the output layer\footnote{Since the output components of $T_l$ are used as the parameters for the Bernoulli distributions.}. In practice, a simple transformation $T_l$ is usually used:
\begin{displaymath}
T_l(h_{l+1}) = \sigma(W_{l} h_{l+1} + b_{l}), l=1,\cdots,L-1,
\end{displaymath}
where $W_{l} \in \mathcal{R}^{H_l \times H_{l+1}}$ are the connection weights between $h_l$ and $h_{l+1}$, 
$b_{l} \in \mathcal{R}^{H_l}$ denotes the biases, 
and $\sigma(\cdot)$ denotes the component-wise, sigmoid function.

For the top-level prior $p_{\theta}(h_L)$, we can use a factorial Bernoulli distribution:
\begin{displaymath}
p_{\theta}(h_L) = \mathcal{FB}[ h_{L}|c_{L,1:D}], 
\end{displaymath}
where $c_{L,1:D}$ are parameters.

If the observation $x$ is a binary vector, then $p_{\theta}(x | h_1)$ can be defined analogously, like in defining $p_{\theta}(h_{l} | h_{l+1})$ as shown above. For real-valued observation vector $x$, $p_{\theta}(x | h_1)$ is often implemented as a diagonal-covariance guassian, as shown in the following use of real-valued $h_l$.

\item Like in DLGMs, we can define $h_l \in \mathcal{R}^{H_l}$ to be continuous hidden vectors, and implement $p_{\theta}(h_{l} | h_{l+1})$ as
\begin{equation} \label{eq:DLGM-conditional}
\left\{
\begin{split}
h_{l} &= T_l(h_{l+1}) + G_l \xi_l, \\
\xi_l &\sim \mathcal{N}[0,I]
\end{split}
\right.
\end{equation}
where $\xi_l$ represents standard normal vectors, $G_l$ are matrices, and the transformations $T_l$ are represented by multi-layer perceptrons (MLPs).

We can use the standard normal for the top-level prior $p_{\theta}(h_L)$.
The observation $x$ can be generated from any appropriate distribution  $p_{\theta}(x | h_1)$, whose parameters are specified by a transformation of the first latent layer $h_1$. 

\end{itemize}

The above formulation unifies the use of stochastic layers and deterministic layers.
The construction of using $T_l$, $l=1,\cdots,L-1$, generally allows us to use as many deterministic and stochastic layers as needed. But in practice modeling with deep stochastic layers presents greater challenge for model learning than modeling with deep deterministic layers, which has been increasingly used and merit a separate introduction as described in the following.

\subsubsection{Prescribed models using deep deterministic layers}
\label{sec:model-def-VAE}

This class of models often has shallow stochastic connections, but use deep deterministic layers in implementing the conditional distributions.
VAEs are such an example class of models, as shown in Figure \ref{fig:directed-model}(b).

The generative process consists of two steps. First, a latent code $h$ is drawn from some prior distribution $p_{\theta}(h)$; Then, an observation $x$ is generated from some conditional distribution $p_{\theta}(x | h)$. This defines the generative model:
\begin{displaymath}
p_{\theta}(x,h) = p_{\theta}(h) p_{\theta}(x | h).
\end{displaymath}

The latent code $h$ could be discrete or continuous. The priori $p_{\theta}(h)$ could be any appropriate distribution, with fixed or learnable parameters. For example, the priori could be the standard normal for real-valued $h$ or the factorial Bernoulli for binary vector $h$, as described above.

The conditional distribution $p_{\theta}(x | h)$ could be a multivariate
Gaussian (in case of real-valued data) or Bernoulli (in case of binary data), whose distribution parameters are computed from $h$ with a MLP, similar to those as described in Eq. \ref{eq:SBN-conditional} and \ref{eq:DLGM-conditional} respectively. As an example in this case, we show in the following the details of $p_{\theta}(x | h)$ used in VAEs:
\begin{equation} \label{eq:VAE-conditional}
p_{\theta}(x | h) = \mathcal{N}[x | \mu_{\theta}(h), diag(\sigma^2_{\theta}(h))]
\end{equation}
where $\mu_{\theta}(h)$ is the mean vector, $\sigma^2_{\theta}(h)$ is the vector of standard deviations, and the functions $\mu_{\theta}(h)$ and $\sigma^2_{\theta}(h)$ are represented as MLPs.
It is worthwhile to compare the implementation of the conditional distribution in VAE (Eq. \ref{eq:VAE-conditional}) with that in DGLM (Eq. \ref{eq:DLGM-conditional}). Although both use Gaussians, the former uses full covariances with a simple parameterization, and the latter uses diagonal covariances with a more flexible parameterization by MLPs.

\subsubsection{Prescribed models using auto-regression}
\label{sec:model-def-auto-regression}
Example models include Recurrent Neural Networks (RNNs), LSTM-RNNs, 
Neural Autoregressive Distribution Estimator (NADE) \cite{Larochelle2011TheNA}, Deep AutoRegressive Networks (DARN) \cite{Gregor2014DeepAN}, PixelCNN \cite{van2016conditional} etc.
By directly factorizing the the joint distribution by chain rule, these models do not involve any additional latent random variables, except the multivariate observation variable $x$ itself.

\subsubsection{Implicit models using deep deterministic layers}
\label{sec:model-def-GAN}

Implicit generative models (also called generative neural samplers in \cite{nowozin2016f-gan}) use a latent variable $\epsilon$, sometimes referred to as the input noise variable, and transform
it using a deterministic function $G_\theta(\epsilon)$, usually represented by a MLP, to define:
\begin{equation} \label{eq:GAN}
\left\{
\begin{split}
x &= G_{\theta}(\epsilon), \\
\epsilon &\sim p(\epsilon).
\end{split}
\right.
\end{equation}

The input noise is assumed to obey a simple priori $p(\epsilon)$, e.g. the standard normal $\mathcal{N}[0,I]$ or the uniform distribution.
The transformation implicitly defines a distribution $p_\theta(x)$.
It is known that if the transformation $G_\theta(\epsilon)$ is invertible, the transformed density $p_{\theta}(x)$ has closed form. 
However point-wise likelihood evaluation is generally intractable with the MLP transformation.
Notably, implicit models allow exact sampling and thus expectations w.r.t. $p_{\theta}(x)$ can be efficiently approximated by Monte Carlo averaging.

\subsection{Model Learning}

As shown above, we can define very flexible directed DGMs $p_{\theta}(x)$, by utilizing DNNs to define the conditional densities.
For this reason, it is often assumed in theoretical analysis that $p_{\theta}(x)$ has infinite capacity (sometime called in the non-parametric setting). In practice, the performances of the models largely depend on how they are optimized in model learning.

Many objective functions and learning methods have been proposed for optimizing generative models.
The optimization criterion used has profound effect on the behavior of the optimized model \cite{theis2016a}.
For example, it is known that the KL divergence is asymmetric, and  optimizing which direction of the KL divergence leads to different trade-offs\footnote{The KL approximation covers the data distribution while reverse-KL has more of a mode-seeking behavior \cite{Minka2005DivergenceMA,nowozin2016f-gan}. 
	The fit corresponding to the Jensen-Shannon divergence is somewhere between KL and reverse-KL.
	}.
Different applications require different trade-offs. How to choose the right criterion for a given application remains to be an tough open question, and is usually empirically determined.

Learning under most criteria is provably consistent given infinite model capacity and data.
Nevertheless, among various criteria, maximum likelihood is appealing due to its nice property (consistency, statistical efficiency, and functional invariance).
For many applications, log-likelihood (or equivalently Kullback-Leibler divergence) has been the de-facto standard for training and evaluating generative models. However, the likelihood of many interesting models is computationally intractable.
The motivation for introducing new training criteria and training methods is often the wish to fit probabilistic models with respect to a different objective that is more or less related to log-likelihood.

Recently, there are increased interests in pairing the targe generative model with some auxiliary model, and jointly training the two models. This basic idea has been proposed and enhanced many times - initially by Helmholtz machines \cite{Hinton1995the,dayan1995helmholtz} and recently by NVIL \cite{mnih2014neural}, VAEs \cite{kingma2014auto-encoding}, RWS \cite{bornschein2014reweighted}, IWAE \cite{burda2015importance}, GANs \cite{goodfellow2014generative}, Joint-stochastic-approximation (JSA) \cite{xu2016joint} and so on.

Depending on the objective functions used in joint training of the target model and the auxiliary model, there exist several representative classes of learning algorithms, as summarized in Table \ref{table:directed-learning-summary}.
Suppose that data $\mathcal{D} = \left\lbrace x_1, \cdots, x_n \right\rbrace$, which consists of $n$ IID (independent and identically distributed) observations drawn from the true but unknown data distribution $p_0(x)$ with support $\mathcal{X} $.
$\tilde{p}(x) \triangleq \frac{1}{n} \sum_{k=1}^{n} \delta(x-x_n)$ denotes the empirical distribution.
\textbf{Maximum likelihood learning} is defined by maximizing the data log-likelihood
\begin{equation} \label{eq:MLE}
\max_{\theta} \sum_{k=1}^{n} log p_\theta(x_k),
\end{equation}
which is equivalently to minimizing
the Kullback-Leibler (KL) divergence between the data distribution and the generative model:	
\begin{displaymath}
\min_{\theta} KL\left[ \tilde{p}(x) || p_\theta(x) \right].
\end{displaymath}

\subsubsection{Variational learning} \label{sec:variational-learning}

The first class, which we refer to as variational learning, includes VAE \cite{kingma2014auto-encoding}, stochastic backpropagation \cite{Rezende2014StochasticBA}, NVIL \cite{mnih2014neural}, MuProp \cite{Gu2015MuPropUB} and so on. Variational learning is mainly applied to prescribed models and uses the variational lower bound of the marginal log-likelihood as the single objective function to optimize the target model and auxiliary model.

A prescribed latent variable model could be generally defined as
\begin{displaymath}
p_\theta(x,h) \triangleq p_\theta(h) p_\theta(x|h), 
\end{displaymath}
which consists of observation variable $x$, hidden variables (or say latent code) $h$, and parameters $\theta$.
It is usually intractable to directly evaluate and maximize the marginal log-likelihood $log p_\theta(x)$.
Following the variational inference approach, we introduce an auxiliary inference model $q_\phi(h|x)$ with parameters $\phi$, which serves as an approximation to the exact posterior $p_\theta(h|x)$, and is often called the \emph{variational distribution} in the context of variational inference \cite{zhang2017advances}. 
The implementation of $q_\phi(h|x)$ is technically similar to implementing $p_\theta(x|h)$ as a prescribed model as introduced in \ref{sec:directed-model-def}.

In the classic variational EM learning algorithm \cite{frey2005comparison}, there is one set of variational parameters for each training sample $x$, to approximate the posterior $p_\theta(h|x)$.
The above usage of $q_\phi(h|x)$, employing a single, global set of parameters $\phi$ over the entire training set, represents a new concept introduced in VAEs, called \textbf{amortized inference}.
By employing inference model $q_\phi(h|x)$, VAEs amortize the variational parameters over the entire training set, instead of optimizing for each training sample individually.

After introducing $q_\phi(h|x)$, we have\footnote{We can directly derive the lower bound by applying Jensen Inequality:
\begin{displaymath}
log p_\theta(x) = log \sum_{h} p_\theta(x,h)
= log \sum_{h} q_\phi(h|x) \frac{p_\theta(x,h)}{q_\phi(h|x)}
\geq \sum_{h} q_\phi(h|x) log \frac{p_\theta(x,h)}{q_\phi(h|x)}.
\end{displaymath}	
	}
\begin{equation} \label{eq:VLB1}
\begin{split}
log p_\theta(x) &= E_{q_\phi(h|x)} log \left( \frac{p_\theta(x,h)}{q_\phi(h|x)} \right) 
+ KL\left[ q_\phi(h|x) || p_\theta(h|x) \right]\\
&\geq E_{q_\phi(h|x)} log \left(\frac{p_\theta(x,h)}{q_\phi(h|x)} \right) \triangleq \mathcal{L}(x;\theta,\phi),
\end{split}
\end{equation}
The right side of the inequality $\mathcal{L}(x;\theta,\phi)$ is called the variational lower bound (V-LB) or the evidence lower bound (ELBO)\footnote{It is shown in \cite{Roeder2017StickingTL} that in \textbf{estimating ELBO}, even when analytic forms of the KL divergence or entropy in Eq. \ref{eq:VLB2}, \ref{eq:VLB3} are available, sometimes it is better to use Eq. \ref{eq:VLB1}, because it will have lower variance.
Specifically, when $q_\phi(z|x) = p_\theta(z|x)$, the variance of the Monte Carlo estimator of the ELBO is exactly zero, since the estimator takes a constant value, independent of $h \overset{IID}{\sim} q_\phi(z|x)$.
}, which could also rewritten as:
\begin{align}
\mathcal{L}(x;\theta,\phi) &= 
E_{q_\phi(h|x)} \left[ log p_\theta(x|h) \right] 
- KL\left[ q_\phi(h|x) || p_\theta(h) \right] \label{eq:VLB2}\\
&= 
E_{q_\phi(h|x)} \left[ log p_\theta(x,h) \right] 
+ H\left[ q_\phi(h|x) \right] \label{eq:VLB3}
\end{align}
From the objective function Eq. \ref{eq:VLB2}, it can be seen that variational learning performs something like \emph{auto-encoding}. The first term is the expected negative reconstruction error, while the second term (the KL divergence between the approximate posterior and the prior) acts as a regularizer. In this sense, $p_\theta(x|h)$ and $q_\phi(h|x)$ are also referred to as decoder and encoder.

\textbf{Variational learning} is to maximize the V-LB over the training data (denoted by $\mathcal{L}(\theta,\phi) \triangleq \sum_{k=1}^{n} \mathcal{L}(x_k;\theta,\phi)$) w.r.t. both the generative parameters $\theta$ and variational parameters $\phi$:
\begin{equation} \label{eq:VAE_obj}
\max_{\theta, \phi} \mathcal{L}(\theta,\phi) = \max_{\theta, \phi} E_{\tilde{p}(x) q_\phi(h|x)} log \left( \frac{p_\theta(x,h)}{q_\phi(h|x)} \right)
\end{equation}
\emph{In this manner, we actually maximize a lower bound to the true maximum-likelihood objective Eq. \ref{eq:MLE}, hoping to maximizing the objective itself.}

By setting to zeros the gradients of the objective w.r.t. $(\theta,\phi)$, the above optimization problem Eq.\ref{eq:VAE_obj} can be solved by finding the root for the following system of simultaneous equations\footnote{A quick proof for the second equation. Taking the derivative of $\mathcal{L}(\theta,\phi)$ w.r.t. $\phi$, we have
\begin{displaymath}
\nabla_\phi \mathcal{L}(\theta,\phi) = E_{\tilde{p}(x) q_\phi(h|x)}\left[ \nabla_\phi logq_\phi(h|x) \right] 
+ E_{\tilde{p}(x) q_\phi(h|x)}\left[ 
log \left(\frac{p_\theta(x,h)}{q_\phi(h|x)} \right) \times
\nabla_\phi logq_\phi(h|x) \right],
\end{displaymath}
where the first term is equal to zero by Fisher Equality, and the the second term is derived using the so-called \textbf{REINFORCE trick} \cite{Williams1992SimpleSG}.}:
\begin{equation} \label{eq:VAE_gradient}
\left\{
\begin{split}
& E_{\tilde{p}(x) q_\phi(h|x)}\left[ \nabla_\theta logp_\theta(x,h) \right] = 0 \\
& E_{\tilde{p}(x) q_\phi(h|x)}\left[ 
log \left(\frac{p_\theta(x,h)}{q_\phi(h|x)} \right) \times
\nabla_\phi logq_\phi(h|x) \right]
= 0
\end{split}
\right.
\end{equation}

It can be shown that Eq.(\ref{eq:VAE_gradient}) exactly follows the form of Eq.(\ref{eq:SA}), so that we can apply the SA algorithm, as shown in Algorithm \ref{alg:VAE}, to find its root and thus solve the optimization problem Eq.(\ref{eq:VAE_obj}).

The above gives the basics of variational learning.
In the following, we present existing problems, recent advances and open questions related to variational learning.

\begin{algorithm}[tb]
	\caption{\textbf{Variational learning}, represented as the SA algorithm with multiple moves}
	\label{alg:VAE}
	\begin{algorithmic}
		\REPEAT
		\STATE \underline{Monte Carlo sampling:}
		Draw a minibatch $\mathcal{M} \sim \tilde{p}(x) q_\phi(h|x)$;
		\STATE \underline{SA updating:}
		Update $\theta$ by ascending: 
		$\frac{1}{|\mathcal{M}|} \sum_{(x,h) \sim \mathcal{M}}
		\nabla_\theta logp_\theta(x,h)$;\\
		Update $\phi$ by ascending: 
		$\frac{1}{|\mathcal{M}|} \sum_{(x,h) \sim \mathcal{M}}
		log \left(\frac{p_\theta(x,h)}{q_\phi(h|x)} \right) \times
		\nabla_\phi logq_\phi(h|x)$;
		\UNTIL{convergence}
	\end{algorithmic}
\end{algorithm}

\begin{enumerate}

\item
\textbf{Handling of continous latent variables, via the reparameterization trick.}
	
A major trouble with variational learning is that while the gradient w.r.t. $\theta$ is usually well-behaved, the usual Monte Carlo gradient estimator w.r.t. $\phi$ is known to have high variance.
In the basic form of variational learning as shown in Algorithm \ref{alg:VAE}, the update of $\phi$ is a REINFORCE-like update rule which trains slowly because it does not use the log-likelihood gradients with respect to latent variables (i.e. $\nabla_h log p_\theta(x,h)$) \cite{burda2015importance}.
To address this problem, there are a lot of efforts.

In the case of using continuous hidden variables $h$, the reparameterization trick \cite{Kingma2014SemiSupervisedLW,Rezende2014StochasticBA} has been developed, which essentially implements gradient back-propagation through continuous random variables and provides an effective method for handling continuous hidden variables.

Suppose that the inference model $q_\phi(h|x)$ can be reparameterized by using a differentiable, deterministic transformation $h_\phi(x,\epsilon)$ of a noise variable $\epsilon$:
\begin{equation} \label{eq:repara}
	h = h_\phi(x,\epsilon)~\text{with}~\epsilon \sim \mathcal{N}[0,I].
\end{equation}
Then, we have
\begin{equation} \label{eq:repara-grad}
\nabla_\phi E_{q_\phi(h|x)} \left[  log \left( \frac{p_\theta(x,h)}{q_\phi(h|x)} \right) \right] =
E_{\epsilon \sim \mathcal{N}[0,I]} \left[  \nabla_\phi log \left( \frac{p_\theta(x, h_\phi(x,\epsilon))}{q_\phi(h_\phi(x,\epsilon)|x)} \right) \right],	
\end{equation}
from which the gradient can be computed by a direct application of the chain rule.
Intuitively, the reparameterization trick provides more informative
$\phi$-gradients by exposing the dependence of sampled
latent variables $h$ on variational parameters $\phi$ through involving $\nabla_h log p_\theta(x,h)$. In
contrast, the REINFORCE $\phi$-gradient estimate only depends
on involving $\nabla_\phi logq_\phi(h|x)$.

\textbf{Improving $\phi$-gradient estimator beyond simple reparameterization  \cite{Roeder2017StickingTL}.}
Under the reparameterization of $h$ as shown in Eq. \ref{eq:repara}, we can decompose the total derivative (TD) of the integrand of estimator Eq. \ref{eq:repara-grad} w.r.t. $\phi$ as follow:
\begin{displaymath}
\begin{split}
&\underbrace{\nabla_\phi \left[ log  p_\theta(x, h_\phi(x,\epsilon)) - log q_\phi(h_\phi(x,\epsilon)|x) \right]}_\text{total derivative} \\
=&\nabla_\phi \left[ log p_\theta(h_\phi(x,\epsilon)|x) + log p_\theta(x) - log q_\phi(h_\phi(x,\epsilon)|x) \right]\\
=&\underbrace{\left\lbrace \nabla_h \left[ log p_\theta(h|x) - log q_\phi(h|x)\right]\right\rbrace |_{h=h_\phi(x,\epsilon)} \nabla_\phi h_\phi(x,\epsilon)}_\text{path derivative} - \underbrace{\left\lbrace \nabla_\phi log q_\phi(h|x) \right\rbrace |_{h=h_\phi(x,\epsilon)}}_\text{score function}
\end{split}
\end{displaymath}
By the chain rule, the reparameterized gradient estimator w.r.t. $\phi$ decomposes into two parts, which are called the path derivative and score function components respectively.
The path derivative measures dependence on $\phi$ through the sample $h$. 
The score function measures dependence on $\phi$ through $log q_\phi(h|x)$, without considering how the sample $h$ changes as a function of $\phi$.

Notably, the score function term has expectation zero. If we simply remove that term, we maintain an unbiased estimator of the true gradient by using only the first term, which is called the path derivative (PD) gradient estimator.
The path derivative estimator has the desirable property that as $q_\phi(z|x)$ approaches $p_\theta(z|x)$, the variance of this estimator goes to zero.
This modification can be interpreted as adding the score function as a control variate.

\item
\textbf{Handling of discrete latent variables.}

For variation learning to work with discrete latent variables, such as Bernoulli or multinomial, there exists the difficulty of back-propagation of gradients through discrete random variables.
There have some efforts to address this difficulty.
\begin{itemize}
\item
The REINFORCE-like trick is employed with various variance-reduction techniques in \cite{mnih2014neural} (NVIL - centering the learning signal $log p_\theta(x,h) - log q_\phi(h|x)$, introducing an input-dependent baseline via a neural network, and doing variance normalization via dividing the centered learning signal by a running estimate of its standard deviation), \cite{mnih2016variational} (VIMCO - Variational Inference for Monte Carlo Objectives, which basically is NVIL plus using the tighter bound IW-LB Eq. \ref{eq:IW-LB}). 
\item
VQ-VAE \cite {vqvae} approximates the gradient with the straight-through (ST) estimator \cite{Bengio2013EstimatingOP}. 
\item
A few studies utilize the Concrete \cite{maddison2016concrete} or Gumbel-softmax \cite{jang2016categorical} distribution as a continuous approximation to a multinomial distribution, and then use 'reparameterization trick', although the gradients of these relaxations are biased.
\end{itemize}

\item
\textbf{Handling of discrete observations.}

The application of variational learning to discrete observations has no theoretical difficulty for gradient propagation, but has been far less successful \cite{Bowman2016GeneratingSF,Miao2016NeuralVI}.
It is found that the LSTM decoder in textual VAE does not make effective use of the latent code during training. This training collapse problem may reflect structure mismatch between the encoder and decoder, and is alleviated in \cite{vaeccnn} after controlling the contextual capacity of the decoder by using dilated CNN.
Structure mismatch between the encoder and decoder causes an irreducible biased gap between the marginal log-likelihood and the ELBO for VAEs.

\item
\textbf{More expressive inference model, going beyond factorized-Gaussian.}

Generally, the quality of variational learning crucially relies on the expressiveness of the inference model to capture the
true posterior distribution. 
It is shown in \cite{Kingma2016ImprovingVI} that using more expressive model classes can lead to substantially better results, both visually
and in terms of log-likelihood bounds. The work in \cite{Chen2016VariationalLA} also suggests that highly expressive inference models are essential in presence of a strong decoder to allow the model to make use of the latent space at all.
There are a number of strategies for increasing the expressiveness of the inference model.
\begin{itemize}
\item
Normalization flow (\citet{rezende2015variational}, NICE \citet{dinh2014nice}, real-valued non-volume preserving (real NVP) \citet{dinh2016density}), Inverse autoregressive flow (IAF) \citet{Kingma2016ImprovingVI}. The basic idea is to use a change of variables procedure for constructing complex distributions by transforming probability densities through a series of invertible mappings.
\item 
Deep generative models can be extended with \emph{auxiliary variables}
which leave the generative model unchanged but make
the variational distribution more expressive.
This idea has been employed in works such as auxiliary deep generative models (ADGM, \citet{maaloe2016auxiliary}), hierarchical variational models (HVM, \citet{ranganath2016hierarchical}) and Hamiltonian
variational inference (HVI, \citet{salimans2015markov}).
\item
Instead of using the prescribed $q_\phi(x|h)$ (often a diagonal-covariance Gaussian parameterized by a MLP), some works propose to use an implicit model (sometimes called a black-box procedure \citet{mescheder2017adversarial}) to implement $q_\phi(h|x)$. Note that $q_\phi(h|x)$ is likelihood-free in this manner.
In order to maximize the V-LB which is rewritten as in the following\footnote{In the following AVB derivation, $p(h)$ is used, dropping the dependence on $\theta$.}
\begin{equation} \label{eq:AVB-VLB}
\max_{\theta, \phi} \mathcal{L}(\theta,\phi) = \max_{\theta, \phi} E_{\tilde{p}(x) q_\phi(h|x)} \left[ log p(h) - log q_\phi(h|x) + log p_\theta(x|h) \right],
\end{equation}
it is proposed in AVB (Adversiarial Variational Bayes, \cite{mescheder2017adversarial}) to implicitly represent the term
\begin{displaymath}
log q_\phi(h|x) - log p(h),
\end{displaymath}
as the optimal value of an additional, infinite capacity, real-valued discriminator $V_\psi(x,h)$, given by
\begin{equation} \label{eq:AVB-v}
\max_{\psi} E_{\tilde{p}(x) q_\phi(h|x)} \left[log \sigma\left(V_\psi(x,z)\right) \right]
+ E_{\tilde{p}(x) p(h)} \left[ log \left(1-\sigma\left(V_\psi(x,z)\right)\right) \right].
\end{equation}
Then we can formulate the variational learning with black-box inference model as jointly optimizing
\footnote{
Through plugging, we transform the optimization Eq. \ref{eq:AVB-VLB} into Eq. \ref{eq:AVB}. 
Plugging itself only guarantees the equality of the objectives, namely $ \mathcal{L}^{AVB}_\psi(\theta,\phi) |_{\psi=\psi^*_\phi} = \mathcal{L}(\theta,\phi)$ for the optimal $\psi^*_\phi$, determined by Eq. \ref{eq:AVB-v}.
\textbf{For the plugging trick to be valid}, we need to additionally check the gradients
\begin{displaymath}
\left\lbrace \nabla_{\theta,\phi} \mathcal{L}^{AVB}_\psi(\theta,\phi) \right\rbrace |_{\psi=\psi^*_\phi} = \nabla_{\theta,\phi} \mathcal{L}(\theta,\phi),
\end{displaymath}
because in optimizing the first line in Eq. \ref{eq:AVB}, we ignore the implicit dependence of $V_\psi(x,h)$ on $\phi$, i.e. we take the gradient of $\mathcal{L}^{AVB}_\psi(\theta,\phi)$ w.r.t. $\left( \theta,\phi \right)$ while treating $\psi$ as constant.
This equality clearly holds for the gradient w.r.t. $\theta$.
For checking the gradient w.r.t. $\phi$, we have
$\nabla_\phi \mathcal{L}^{AVB}_\psi(\theta,\phi) = 
E_{q_\phi(h|x)} \left[ \nabla_\phi log q_\phi(h|x) \left(  - V_\psi(x,h) + log p_\theta(x|h) \right)  \right]$, and thus
\begin{displaymath}
\left\lbrace \nabla_\phi \mathcal{L}^{AVB}_\psi(\theta,\phi) \right\rbrace |_{\psi=\psi^*_\phi} 
=E_{\tilde{p}(x) q_\phi(h|x)} \left[ \nabla_\phi log q_\phi(h|x) \left( -log q_\phi(h|x) + log p(h) + log p_\theta(x|h) \right)  \right],
\end{displaymath}
which is indeed equal to $\nabla_{\phi} \mathcal{L}(\theta,\phi)$.
This check is equivalent to check that $E_{q_\phi(h|x)} \left[ \nabla_\phi \left\lbrace V_\psi(x,h) |_{\psi=\psi^*_\phi} \right\rbrace  \right] = 0 $, which is taken in the Proposition 2 in the original AVB paper \citep{mescheder2017adversarial}. See Appendix \ref{sec:appendix-plugging} for more discussion.
}
\begin{equation}
\label{eq:AVB}
\left\{
\begin{split}
& \max_{\theta, \phi} E_{\tilde{p}(x) q_\phi(h|x)} \left[ - V_\psi(x,h) + log p_\theta(x|h) \right] \triangleq \mathcal{L}^{AVB}_\psi(\theta,\phi)  \\
& \max_{\psi} E_{\tilde{p}(x) q_\phi(h|x)} \left[log \sigma\left(V_\psi(x,z)\right) \right]
+ E_{\tilde{p}(x) p(h)} \left[ log \left(1-\sigma\left(V_\psi(x,z)\right)\right) \right] \\
\end{split}
\right.
\end{equation}
By setting to zeros the gradients of the objective w.r.t. $(\theta,\phi,\psi)$, the above optimization problem can be solved by applying the SA algorithm.
In this joint optimization, AVB actually combines variational learning with adversarial learning (hence called ``AVB'').

The price for using black-box inference model is that we need to jointly optimize three set of parameters $(\theta,\phi,\psi)$.
We trade the increasing expressiveness of the inference model with the increasing difficulty of optimization, which reminds us the No Free Lunch Theorems for optimization \cite{Wolpert1997NoFL}.
It is important to understand the trade-offs between different learning methods, so that we can choose appropriate methods for different applications.
\end{itemize}

\item
\textbf{Reducing the amortization gap.}

From Eq. \ref{eq:VLB1}, we see that the ELBO is tight when $q_\phi(h|x) = p_\theta(h|x)$. 
The above studies in finding more expressive inference model are generally motivated to reduce the gap between the marginal log-likelihood and the ELBO in VAEs. It can be seen from the following discussion, these studies mainly reduce the approximation gap.

Recently, there are careful studies in understanding what factors cause this gap between the marginal log-likelihood and the ELBO in VAEs, which is called the inference gap in \cite{cremer2018inference}.
Let $q_{\phi^*_x}(h|x)$ refers to the optimal approximation within the family $\mathcal{Q}$, i.e. $q_{\phi^*_x}(h|x) = argmin_{q \in \mathcal{Q}} KL\left[ q_\phi(h|x) || p_\theta(h|x) \right] $.
For each training sample $x$, we optimize to find the sample-specific optimal variational parameter $\phi^*_x$.
The inference gap decomposes as the sum of approximation and amortization gaps:
\begin{displaymath}
\begin{split}
\underbrace{log p_\theta(x) - \mathcal{L}(\theta,\phi)}_\text{Inference Gap} &= \underbrace{log p_\theta(x) - \mathcal{L}(\theta,{\phi^*_x})}_\text{Approximation Gap} + \underbrace{\mathcal{L}(\theta,{\phi^*_x}) - \mathcal{L}(\theta,{\phi})}_\text{Amortization Gap}\\
&= \underbrace{KL\left[ q_{\phi^*_x}(h|x) || p_\theta(h|x) \right]}_\text{Approximation Gap} + \underbrace{KL\left[ q_\phi(h|x) || p_\theta(h|x) \right]- KL\left[ q_{\phi^*_x}(h|x) || p_\theta(h|x) \right]}_\text{Amortization Gap}
\end{split}
\end{displaymath}
It is found in \cite{cremer2018inference} that divergence from the true posterior is often due to amortized inference models,
rather than the limited complexity of the approximating
distribution family; this is due partly to the generator learning to accommodate the choice of approximation.
Regarding this, a hybrid approach is proposed in \cite{kim2018semi}, which uses Amortized variational inference (AVI) to initialize the variational parameters and runs stochastic variational inference (SVI) to refine them. 
Crucially, the local SVI procedure is itself differentiable, so the inference network and generative model can be trained end-to-end with gradient-based optimization.

\item
\textbf{Exploring alternative training criteria, e.g. learning with tighter lower bounds.}

Recently, a tighter log-likelihood lower bound than the variational lower bound is derived from using importance weighting and applied to improve the variational learning.
The resulting new learning algorithm is called \textbf{importance weighted autoencoders (IWAEs)} \cite{burda2015importance}.

Using importance weighting with $q_\phi(h|x)$ as the proposal, we can approximate the marginal likelihood $p_\theta(x)$ as an average of importance weights:
\begin{equation} \label{eq:IW-lik}
\begin{split}
p_\theta(x) = \sum_{h} q_\phi(h|x) \frac{p_\theta(x,h)}{q_\phi(h|x)}
\approx \frac{1}{K} \sum_{k=1}^{K} \frac{p_\theta(x,h_k)}{q_\phi(h_k|x)}, h_k \sim q_\phi(h_k|x),
\end{split}
\end{equation}
Let 
\begin{equation} \label{eq:IS-weights}
w_{\theta,\phi}^k = \frac{p_\theta(x,h_k)}{q_\phi(h_k|x)}, ~~\bar{w}_{\theta,\phi}^k = \frac{w_{\theta,\phi}^k}{\sum_{k'=1}^{K} w_{\theta,\phi}^{k'}}
\end{equation}
be the importance weights and the normalized weights respectively.
It can be seen that the average of importance weights,
\begin{displaymath}
\hat{p}^K_{\theta, \phi}(x) = \frac{1}{K} \sum_{k=1}^{K} w_{\theta,\phi}^k,
\end{displaymath} 
is an \emph{unbiased} estimator of the marginal likelihood $p_\theta(x)$:
\begin{displaymath}
E_{q_\phi(h_{1:K}|x)}\left[ \hat{p}^K_{\theta, \phi}(x) \right] = p_\theta(x),
\end{displaymath}
where $q_\phi(h_{1:K}|x) \triangleq \prod_{k=1}^{K} q_\phi(h_k|x)$. By Jensen’s Inequality, we obtain an importance-weighting lower bound (IW-LB) on the marginal log-likelihood $log p_\theta(x)$:
\begin{equation} \label{eq:IW-LB}
\mathcal{I}_K(x;\theta,\phi) \triangleq
E_{q_\phi(h_{1:K}|x)} \left[ log \hat{p}^K_{\theta, \phi}(x) \right] \le 
log E_{q_\phi(h_{1:K}|x)} \left[ \hat{p}^K_{\theta, \phi}(x) \right] = log p_\theta(x),
\end{equation}
and further we have \cite{burda2015importance}
\begin{displaymath}
\mathcal{L}(x;\theta,\phi) \le
\mathcal{I}_K(x;\theta,\phi) \le 
\mathcal{I}_{K+1}(x;\theta,\phi) \le
log p_\theta(x),
\end{displaymath}
and $\mathcal{I}_K$ approaches $log p_\theta(x)$ as $K$ goes to infinity. Note that the IW-LB in the special case of $K=1$ is equal to the standard V-LB: $\mathcal{I}_1(x;\theta,\phi) = \mathcal{L}(x;\theta,\phi)$. Therefore, IW-LB (when $K>1$) is a tighter lower bound on the marginal log-likelihood\footnote{It is shown in \cite{le2017auto} that the gap between the IW-LB and the marginal log-likelihood is a KL divergence on an extended sampling space:
\begin{displaymath}
\mathcal{I}_K(x;\theta,\phi) = log p_\theta(x)
-E_{q_\phi(h_{1:K}|x)} \left[ log \left(
\frac{q_\phi(h_{1:K}|x)}{\frac{1}{K} \sum_{k=1}^{K} q_\phi(h_{1:K}|x) \frac{p_\theta(h_k|x)}{q_\phi(h_k|x)}} \right)\right]
\end{displaymath}
}. 

IWAE is to maximize the IW-LB over the training data (denoted by $\mathcal{I}_K(\theta,\phi) \triangleq \sum_{k=1}^{n} \mathcal{I}_K(x_k;\theta,\phi)$) w.r.t. both $\theta$ and $\phi$.
As with the VAE, we need to use the reparameterization trick to derive a low-variance update rule. Thus in the original paper \cite{burda2015importance}, IWAE is only applied to DGMs consisting of continuous latent variables.\footnote{Later in VIMCO \cite{mnih2016variational}, the IW-LB is also applied to handle discrete latent variables, with additional variance reduction techniques.}

Specifically, we introduce a set of parameter-free, auxiliary noise variables $\epsilon_1,\cdots,\epsilon_K$. Then, the expectation under $q_\phi(h_{1:K}|x)$ in $\mathcal{I}_K(x;\theta,\phi)$ can be calculated as:
\begin{displaymath}
\begin{split}
\mathcal{I}_K(x;\theta,\phi) &\triangleq
E_{q_\phi(h_{1:K}|x)} \left[ log \left(\frac{1}{K} \sum_{k=1}^{K} \frac{p_\theta(x,h_k)}{q_\phi(h_k|x)} \right)\right] \\
&= E_{\epsilon_{1:K}} \left[ log \frac{1}{K} \left( \sum_{k=1}^{K} \frac{p_\theta(x,h_\phi(x,\epsilon_k))}{q_\phi(h_\phi(x,\epsilon_k)|x)} \right) \right],
\end{split}
\end{displaymath}
where $h_\phi(x,\epsilon_k)$ is the deterministic transformation of  $\epsilon_k$ such that the transformed density follows $q_\phi(h_k|x)$.
The derivative of $\mathcal{I}_K(\theta,\phi)$ w.r.t. $(\theta,\phi)$ can be calculated as:
\begin{displaymath}
\nabla_{(\theta,\phi)} \mathcal{I}_K(x;\theta,\phi) = 
E_{\epsilon_{1:K}} \left[ \nabla_{(\theta,\phi)} log \left(\frac{1}{K} \sum_{k=1}^{K} \frac{p_\theta(x,h_\phi(x,\epsilon_k))}{q_\phi(h_\phi(x,\epsilon_k)|x)} \right) \right].
\end{displaymath}
It can be seen that in the special case of $K=1$, the IWAE update rule reduces to the VAE update rule.

Some more notes. 
\begin{itemize}
\item
Often, the second term (the KL-divergence) in the VL-B Eq. \ref{eq:VLB2} can be integrated analytically, such that only the expected reconstruction error (the first term in the VL-B Eq. \ref{eq:VLB2}) requires estimation by sampling.
Unfortunately, no analogous trick of reducing variances applies for IWAE when $K>1$. In principle, the IWAE updates may be higher variance for this reason. However, it is reported in their experiments \cite{burda2015importance} that the performance
of the two update rules was indistinguishable in the case of $K=1$.
\item
Recently, it is reported in \cite{rainforth2018tighter} that increasing the number of importance sampling particles, $K$, to tighten the bound IW-LB, degrades the signal-to-noise ratio (SNR) of the gradient estimates for the inference network, inevitably deteriorating the overall learning process.
In short, this behavior manifests because even though
increasing $K$ decreases \emph{the standard deviation} of the gradient
estimates w.r.t. $\phi$ (Eq. \ref{eq:RWS_phi_gradient_mc}), it decreases \emph{the magnitude} of the true gradient faster, such that the relative variance increases.
Their results suggest that it may be best to use distinct objectives for learning the generative and inference networks.

\end{itemize}

\end{enumerate}

\subsubsection{Wake-sleep learning}

The wake-sleep (WS) learning algorithm is originally proposed for training Helmholtz machines\footnote{In the literature, a Helmholtz machine (HM) is often referred to as a combination of the model itself and the WS algorithm. But we describe the WS algorithm in the general form, which is not limited to train a SBN as presented in the original work \cite{Hinton1995the,dayan1995helmholtz}.}.
It represents a class of algorithms to train a prescribed latent variable model $p_\theta(x,h)$ together with an inference model $q_\phi(h|x)$, which uses two objective functions - the variational lower bound (V-LB) for optimizing $\theta$ over empirical samples $\tilde{p}(x)$ (\textbf{wake phase $\theta$ update}) and the inclusive KL-divergence
$KL[p_\theta(h|x)|| q_\phi(h|x)]$
for optimizing $\phi$ over model samples $p_\theta(x)$ (\textbf{sleep phase $\phi$ update}). 

The motivation is mixed. Learning through many cycles of sensing and dreaming seems to be biological attractive.
In addition, note that in variational learning, the optimization w.r.t. $\theta$ and $\phi$ could be rewritten as:
\begin{equation} \label{eq:VAE-obj-re}
\left\{
\begin{split}
&\max_\theta \mathcal{L}(\theta,\phi) \Leftrightarrow \min_\theta KL\left[ \tilde{p}(x) q_\phi(h|x) || p_\theta(x,h) \right] \\
&\max_\phi \mathcal{L}(\theta,\phi) \Leftrightarrow \min_\phi E_{\tilde{p}(x)} KL\left[ q_\phi(h|x) || p_\theta(h|x) \right]
\end{split}
\right.
\end{equation}
The optimization of the variational lower bound $\mathcal{L}(\theta,\phi)$ with respect to $\phi$ amounts to minimize the exclusive KL-divergence $KL(q_\theta(h|x|| p_\phi(h|x)))$ over empirical samples, which has the undesirable effect of high variance mentioned before.
Updating $\phi$ by optimizing the inclusive KL-divergence would be easier than by optimizing the exclusive KL-divergence.
With this modification of updating $\phi$ while performing the same update of $\theta$ as in variational learning, we obtain \textbf{the WS algorithm}:
\begin{equation} \label{eq:WS-obj}
\left\{
\begin{split}
&\max_\theta \mathcal{L}(\theta,\phi) \Leftrightarrow \min_\theta KL\left[ \tilde{p}(x) q_\phi(h|x) || p_\theta(x,h) \right]\\
&\min_\phi E_{p_\theta(x)} KL\left[ p_\theta(h|x) || q_\phi(h|x) \right] \Leftrightarrow \min_\phi KL\left[ p_\theta(x,h) || p_\theta(x) q_\phi(h|x) \right]
\end{split}
\right.
\end{equation}
Note that the WS algorithm not only changes the direction of the KL-divergence in optimizing $\phi$, but also changes the samples over which $\phi$ is updated - from using empirical samples to using model samples. Hence, updating $\phi$ is called sleep phase update in the WS algorithm.

By setting to zeros the gradients of the objective w.r.t. $(\theta,\phi)$, the above optimization problem can be solved by finding the root for the following system of simultaneous equations:
\begin{equation} \label{eq:WS_gradient}
\left\{
\begin{split}
& E_{\tilde{p}(x) q_\phi(h|x)}\left[ \nabla_\theta logp_\theta(x,h) \right] = 0 \\
& E_{p_\theta(x,h)}\left[ \nabla_\phi logq_\phi(h|x) \right]
= 0
\end{split}
\right.
\end{equation}

It can be shown that Eq.(\ref{eq:WS_gradient}) exactly follows the form of Eq.(\ref{eq:SA}), so that we can apply the SA algorithm, as shown in Algorithm \ref{alg:WS}, to find its root and thus solve the optimization problem Eq.(\ref{eq:WS-obj}).

\begin{prop}
	If Eq.(\ref{eq:WS_gradient}) is solvable, then we can apply the SA algorithm to find its root.
\end{prop}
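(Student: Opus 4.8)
The plan is to exhibit the Wake-Sleep root-finding problem Eq.~\ref{eq:WS_gradient} as a special case of the generic SA root-finding problem Eq.~\ref{eq:SA}; once this identification is made, the applicability of the SA algorithm together with its convergence under the regularity conditions of Theorem~\ref{theorem:SA} follows immediately. First I would set the SA parameter to the concatenation $\lambda = (\theta,\phi)$, so that the two stacked equations in Eq.~\ref{eq:WS_gradient} jointly constrain the $d$-dimensional vector $\lambda$. It is worth stressing at the outset that, unlike the variational case Eq.~\ref{eq:VAE_gradient}, the two WS updates do \emph{not} arise as the gradient of a single scalar objective, so I would frame the problem purely as root finding rather than optimization; this is exactly the generality that the formulation Eq.~\ref{eq:SA} affords.

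The principal structural feature to reconcile is that the two components of Eq.~\ref{eq:WS_gradient} are expectations under \emph{different} distributions: the wake-phase equation integrates against $\tilde{p}(x) q_\phi(h|x)$, whereas the sleep-phase equation integrates against the model joint $p_\theta(x,h)$. To carry both within the single sampling distribution $p(\cdot;\lambda)$ demanded by Eq.~\ref{eq:SA}, I would pass to an augmented sample space and write $z = (x^w, h^w, x^s, h^s)$ with the product law
\begin{displaymath}
p(z;\lambda) \triangleq \tilde{p}(x^w)\, q_\phi(h^w|x^w)\, p_\theta(x^s, h^s),
\end{displaymath}
so that the wake block $(x^w,h^w)$ is drawn from $\tilde{p}(x) q_\phi(h|x)$ and the sleep (``dream'') block $(x^s,h^s)$ from $p_\theta(x,h)$, independently of one another.

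Next I would define the noisy measurement $F(z;\lambda)\in R^d$ by stacking the two integrands,
\begin{displaymath}
F(z;\lambda) \triangleq
\begin{pmatrix}
\nabla_\theta log p_\theta(x^w, h^w) \\
\nabla_\phi log q_\phi(h^s|x^s)
\end{pmatrix},
\end{displaymath}
where each block depends only on the corresponding part of $z$. Because $p(z;\lambda)$ factorizes over the independent wake and sleep blocks, the expectation $f(\lambda) = E_{z\sim p(\cdot;\lambda)}[F(z;\lambda)]$ separates into the wake marginal acting on the first block and the sleep marginal acting on the second, and a one-line computation shows that $f(\lambda)$ reproduces exactly the left-hand sides of Eq.~\ref{eq:WS_gradient}; hence the WS problem \emph{is} an instance of Eq.~\ref{eq:SA}. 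I would also check feasibility of the sampling step: the wake block is produced by drawing a training example $x^w \sim \tilde{p}(x)$ followed by a forward pass $h^w \sim q_\phi(h|x^w)$, and the sleep block by top-down ancestral sampling $(x^s,h^s)\sim p_\theta(x,h)$. Both are exact, so the Markov transition kernel that simply redraws a fresh $z$ trivially admits $p(\cdot;\lambda)$ as its invariant distribution, matching the kernel requirement of Algorithm~\ref{alg:SA}.

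With the WS problem matched to the SA template, the conclusion is immediate: since Eq.~\ref{eq:WS_gradient} is assumed solvable, the root set $\mathcal{L}=\{\lambda : f(\lambda)=0\}$ is nonempty, and Theorem~\ref{theorem:SA}---invoking the Lyapunov condition on $f$, the drift condition on the kernel, compactness of $\Lambda$, and the standard learning-rate schedule---guarantees almost-sure convergence of the SA iterates to $\mathcal{L}$. This instantiation is precisely Algorithm~\ref{alg:WS}. I expect the only genuine obstacle to be the bookkeeping of the augmented space that lets a single $p(\cdot;\lambda)$ carry the two mismatched expectations while keeping each block of $F$ measurable with respect to the right marginal; verifying the Lyapunov and drift conditions themselves is not needed here, as they are imported as hypotheses from Theorem~\ref{theorem:SA}.
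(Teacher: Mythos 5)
Your proposal is correct and takes essentially the same approach as the paper: it recasts Eq.(\ref{eq:WS_gradient}) as $f(\lambda)=0$ with $\lambda=(\theta,\phi)$, an augmented sample $z$ containing independent wake and sleep blocks drawn from the product law $\tilde{p}(x)\,q_\phi(h|x)\times p_\theta(x',h')$, and $F$ stacking the two gradient integrands block-wise. Your added remarks on exact sampling of both blocks and the invocation of Theorem~\ref{theorem:SA} are consistent elaborations of the same identification.
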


\begin{proof}
	This can be readily shown by recasting Eq.(\ref{eq:WS_gradient}) in the form of $f(\lambda) = 0$, with $\lambda \triangleq (\theta, \phi)^T$, $z \triangleq (x,h,x',h')^T$, $p(z; \lambda) \triangleq \tilde{p}(x) q_\phi(h|x) \times p_\theta(x',h')$, and 
	\begin{displaymath}
	F(z; \lambda) \triangleq \left( \begin{array}{c}
	\nabla_\theta logp_\theta(x,h) \\
	\nabla_\phi logq_\phi(h'|x')
	\end{array} \right).
	\end{displaymath}
\end{proof}

\begin{algorithm}[tb]
	\caption{\textbf{Wake-sleep learning}, represented as the SA algorithm with multiple moves}
	\label{alg:WS}
	\begin{algorithmic}
		\REPEAT
		\STATE \underline{Monte Carlo sampling:}
		Draw a wake minibatch $\mathcal{M} \sim \tilde{p}(x) q_\phi(h|x)$;\\
		Draw a sleep minibatch $\mathcal{M}' \sim p_\theta(x',h')$;
		\STATE \underline{SA updating:}
		Wake-phase $\theta$ update by ascending: 
		$\frac{1}{|\mathcal{M}|} \sum_{(x,h) \sim \mathcal{M}}
		\nabla_\theta logp_\theta(x,h)$;\\
		Sleep-phase $\phi$ update by ascending: 
		$\frac{1}{|\mathcal{M}'|} \sum_{(x,h) \sim \mathcal{M}'}
		\nabla_\phi logq_\phi(h'|x')$;
		\UNTIL{convergence}
	\end{algorithmic}
\end{algorithm}

Recently, importance weighting is applied to improve the WS learning.
The resulting new learning algorithm is called \textbf{reweighted wake-sleep (RWS)} \cite{bornschein2014reweighted}.
A historic note: the importance weighting technique is first used in RWS to estimate the gradients of the marginal log-likelihood, and later used in IWAE to  estimate the marginal log-likelihood and derive a tighter log-likelihood lower bound than the variational lower bound as introduced before in Eq. \ref{eq:IW-lik}.

\textbf{Wake-phase $\theta$ update in RWS.} There are two ways to derive the gradient of the marginal log-likelihood based on importance weighting.
First, it is shown in RWS \cite{bornschein2014reweighted} that
\begin{equation} \label{eq:RWS_gradient_mc}
\begin{split}
\nabla_\theta log p_\theta(x)
&= \frac{1}{p_\theta(x)} \sum_{h} \nabla_\theta p_\theta(x,h)
= \frac{1}{p_\theta(x)} \sum_{h} q_\phi(h|x) \frac{p_\theta(x,h)}{q_\phi(h|x)} \nabla_\theta log p_\theta(x,h)\\
&\approx \frac{1}{p_\theta(x)} \frac{1}{K} \sum_{k=1}^{K} \frac{p_\theta(x,h_k)}{q_\phi(h_k|x)} \nabla_\theta log p_\theta(x,h_k), h_k \sim q_\phi(h_k|x) \\
&= \sum_{k=1}^{K} \bar{w}_{\theta,\phi}^k \nabla_\theta log p_\theta(x,h_k), \text{(by Eq. \ref{eq:IW-lik})}
\end{split}
\end{equation}
where, as defined in Eq. \ref{eq:IS-weights} but rewritten here, $w_{\theta,\phi}^k \triangleq \frac{p_\theta(x,h_k)}{q_\phi(h_k|x)}$ is the importance weights,
$\bar{w}_{\theta,\phi}^k = \frac{w_{\theta,\phi}^k}{\sum_{k'=1}^{K} w_{\theta,\phi}^{k'}}$ is the normalized weights.
It is also noted in \cite{bornschein2014reweighted} that the above is a \emph{biased} estimator for the gradient of the marginal log-likelihood w.r.t $\theta$, because it implicitly contains a division by the estimated $p_\theta(x)$ from Eq. \ref{eq:IW-lik}.

Second, we show that the above gradient estimator (Eq. \ref{eq:RWS_gradient_mc}) is in fact for the gradient of the IW-LB  of the marginal log-likelihood (as defined in Eq. \ref{eq:IW-LB}), not for the gradient of the marginal log-likelihood itself.
This can be readily seen from calculating
\begin{equation} \label{eq:RWS_gradient}
\nabla_\theta \mathcal{I}_K(x;\theta,\phi) =
E_{q_\phi(h_{1:K}|x)} \left[ \nabla_\theta log \left(\frac{1}{K} \sum_{k=1}^{K} \frac{p_\theta(x,h_k)}{q_\phi(h_k|x)} \right)\right],
\end{equation}
whose Monte Carlo estimator is exactly Eq. \ref{eq:RWS_gradient_mc}.

Note that the gradient w.r.t. $\theta$ in variational learning (Eq. \ref{eq:VAE_gradient})
and in WS learning are the same (Eq. \ref{eq:WS_gradient}), and both are equivalent to the above gradient w.r.t. in $\theta$ Eq. \ref{eq:RWS_gradient} when using only $K=1$ sample.

\textbf{Wake phase $\phi$ update in RWS.} Analogous to Eq. \ref{eq:RWS_gradient_mc} (wake-phase $\theta$ update), importance sampling is used in RWS to derive gradients for the wake phase $\phi$ update as well, that is :
\begin{displaymath}
\min_\phi E_{\tilde{p}(x)} KL\left[ p_\theta(h|x) || q_\phi(h|x) \right]
\end{displaymath}
The gradient w.r.t. $\phi$ involves:
\begin{equation} \label{eq:RWS_phi_gradient_mc}
\begin{split}
\sum_{h} p_\theta(h|x) \nabla_\phi log q_\phi(h|x)
&= \frac{1}{p_\theta(x)} \sum_{h} q_\phi(h|x) \frac{p_\theta(x,h)}{q_\phi(h|x)} \nabla_\phi log q_\phi(h|x)\\
&\approx \frac{1}{p_\theta(x)} \frac{1}{K} \sum_{k=1}^{K} \frac{p_\theta(x,h_k)}{q_\phi(h_k|x)} \nabla_\phi log q_\phi(h_k|x), h_k \sim q_\phi(h_k|x) \\
&= \sum_{k=1}^{K} \bar{w}_{\theta,\phi}^k \nabla_\phi log q_\phi(h_k|x), \text{(by Eq. \ref{eq:IW-lik})}
\end{split}
\end{equation}
with the same importance weights $\bar{w}_{\theta,\phi}^k$ as in Eq. \ref{eq:RWS_gradient_mc}.
Again, the above is a \emph{biased} estimator.
Also note that in the above, it is tricky to apply importance sampling after taking the derivative.
Taking the derivative after applying importance sampling would make things more complicated.

\subsubsection{Joint-stochastic-approximation learning}

The joint stochastic approximation (JSA) learning algorithm is originally proposed in \cite{xu2016joint} for learning a broad class of directed generative models.

Consider a generative model $p_\theta(x,h) \triangleq p_\theta(h) p_\theta(x|h)$, consisting of observation variable $x$, hidden variables (or say latent code) $h$, and parameters $\theta$.
It is usually intractable to directly evaluate and maximize the marginal log-likelihood $log p_\theta(x)$, but it is well-known that we have, according to Fisher Equality,
\begin{displaymath}
\nabla_\theta log p_\theta(x) = E_{p_\theta(h|x)}\left[ \nabla_\theta logp_\theta(x,h)\right] 
\end{displaymath}

Like in the variational learning and the WS learning as reviewed above, JSA also jointly train the target generative model $p_\theta(x,h)$ together with an auxiliary inference model $q_\phi(h|x)$.
A distinctive key idea of JSA learning is that 
in addition to maximizing w.r.t. $\theta$ the marginal log-likelihood, it simultaneously minimizes w.r.t. $\phi$ the \emph{inclusive} KL divergence $KL(p_\theta(h|x)||q_\phi(h|x))$ between the posteriori and the inference model, and fortunately, we can use the SA framework to solve the optimization problem.

\textbf{JSA learning} is to formulate the maximum likelihood learning as jointly optimizing
\begin{equation}
\label{eq:JSA_unsup_obj}
\left\{
\begin{split}
& \min_{\theta} KL\left[ \tilde{p}(x) || p_\theta(x) \right] \\
& \min_{\phi} KL\left[ \tilde{p}(x) p_\theta(h|x)|| \tilde{p}(x) q_\phi(h|x) \right] \\
\end{split}
\right.
\end{equation}
By setting the gradients to zeros, the above optimization problem can be solved by finding the root for the following system of simultaneous equations:
\begin{equation}
\label{eq:JSA_unsup_gradient}
\left\{
\begin{split}
& E_{\tilde{p}(x) p_\theta(h|x)}\left[\nabla_\theta logp_\theta(x,h)\right] = 0 \\
& E_{\tilde{p}(x) p_\theta(h|x)}\left[ \nabla_\phi logq_\phi(h|x)\right] = 0 \\
\end{split}
\right.
\end{equation}

It can be shown that Eq.(\ref{eq:JSA_unsup_gradient}) exactly follows the form of Eq.(\ref{eq:SA}), so that we can apply the SA algorithm, as shown in Algorithm \ref{alg:JAE}, to find its root and thus solve the optimization problem Eq.(\ref{eq:JSA_unsup_obj}). 

\begin{prop}
	If Eq.(\ref{eq:JSA_unsup_gradient}) is solvable, then we can apply the SA algorithm to find its root.
\end{prop}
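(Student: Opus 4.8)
The plan is to mimic the proof of the preceding proposition for wake-sleep learning, recasting Eq.(\ref{eq:JSA_unsup_gradient}) into the canonical root-finding form $f(\lambda)=0$ of Eq.(\ref{eq:SA}) and then invoking Algorithm \ref{alg:SA}. First I would collect the unknowns into a single parameter vector $\lambda \triangleq (\theta,\phi)^T$, so that the two coupled equations become the two coordinate blocks of one vector equation $f(\lambda) = 0$.

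The essential observation, and the point where JSA is formally simpler than wake-sleep, is that both expectations in Eq.(\ref{eq:JSA_unsup_gradient}) are taken under the \emph{same} distribution $\tilde{p}(x) p_\theta(h|x)$. In the wake-sleep proof the $\theta$-equation averaged over $\tilde{p}(x)q_\phi(h|x)$ while the $\phi$-equation averaged over the model $p_\theta(x',h')$, forcing the augmented variable $z=(x,h,x',h')$. Here a single latent draw suffices, so I would set $z \triangleq (x,h)^T$, $p(z;\lambda) \triangleq \tilde{p}(x) p_\theta(h|x)$, and
\begin{displaymath}
F(z;\lambda) \triangleq \left( \begin{array}{c} \nabla_\theta log p_\theta(x,h) \\ \nabla_\phi log q_\phi(h|x) \end{array} \right).
\end{displaymath}
With these identifications $f(\lambda) = E_{z \sim p(\cdot;\lambda)}[F(z;\lambda)]$ reproduces exactly the two lines of Eq.(\ref{eq:JSA_unsup_gradient}); note that the sampling distribution $p(z;\lambda)$ depends on $\lambda$ through $\theta$, precisely as permitted in Eq.(\ref{eq:SA}). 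Given the assumed solvability of Eq.(\ref{eq:JSA_unsup_gradient}), the root set $\mathcal{L}=\left\lbrace \lambda: f(\lambda)=0 \right\rbrace$ is nonempty, and Algorithm \ref{alg:SA} applies to locate it.

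The step I expect to be the real content, rather than the formal recasting, is supplying a Markov transition kernel $K_\lambda(\cdot|\cdot)$ that admits $p(\cdot;\lambda) = \tilde{p}(x) p_\theta(h|x)$ as its invariant distribution, since the posterior $p_\theta(h|x)$ is intractable and cannot be sampled directly. This is exactly what the template in Algorithm \ref{alg:SA} anticipates: one fixes $x$ by drawing from $\tilde{p}(x)$ and then advances $h$ by one step of an MCMC kernel targeting $p_\theta(h|x)$. The natural choice, and the distinctive ingredient of JSA, is a Metropolis-Hastings kernel that uses the learned inference model $q_\phi(h|x)$ as the proposal, so that $q_\phi$ serves as an adaptive proposal rather than a variational approximation. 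I would then remark that the multiple-moves variant, Algorithm \ref{alg:SA-multiple-move}, can be used to run several kernel steps per update to mitigate slow mixing, and that convergence of the resulting iterates follows from Theorem \ref{theorem:SA} once the stated Lyapunov and drift conditions are assumed.
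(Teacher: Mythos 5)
Your recasting with $z \triangleq (x,h)^T$ and $p(z;\lambda) \triangleq \tilde{p}(x)\,p_\theta(h|x)$ is formally a valid instance of Eq.(\ref{eq:SA}), but it is precisely the recasting that the paper considers and \emph{rejects} (in the footnote accompanying its MIS construction), because the kernel you build on top of it cannot be implemented. You correctly identify that the real content of the proposition is supplying a Markov transition kernel admitting $p(\cdot;\lambda)$ as invariant distribution, but the kernel you propose fails in one of two ways. If, as MIS on the joint state requires, the acceptance ratio compares the proposed $z=(x,h)$ against the current $z^{(t-1)}=(x^{(t-1)},h^{(t-1)})$, then with $w(z) = \frac{p_\theta(h|x)}{q_\phi(h|x)} = \frac{p_\theta(x,h)}{p_\theta(x)\,q_\phi(h|x)}$ the ratio $\frac{w(z)}{w(z^{(t-1)})}$ contains $\frac{p_\theta(x^{(t-1)})}{p_\theta(x)}$; since $x \ne x^{(t-1)}$ in general, these intractable marginals do not cancel and the acceptance probability cannot be computed. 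If instead you fix the freshly drawn $x$ and treat the stale $h^{(t-1)}$ (sampled under a different data point) as the current state of an MH step targeting $p_\theta(\cdot|x)$ --- which makes the ratio computable, since the same $x$ appears in numerator and denominator --- then the joint kernel no longer leaves $\tilde{p}(x)p_\theta(h|x)$ invariant: MH invariance of the conditional step only applies when its input is already distributed as $p_\theta(\cdot|x)$, whereas here the input $h^{(t-1)}$ is distributed as the posterior mixture $\int \tilde{p}(x')p_\theta(\cdot|x')dx'$. Either way, ``we can apply the SA algorithm'' is not established.

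The paper's proof avoids this by a different state augmentation: $z \triangleq (k,h_1,\cdots,h_n)^T$ with $p(z;\lambda) \triangleq \frac{1}{n}\prod_{j=1}^{n}p_\theta(h_j|x_j)$, where $k$ is a uniform index over $\{1,\cdots,n\}$, and $F(z;\lambda)$ evaluates the two gradients at the indexed pair $(x_k,h_k)$. In other words, one persistent latent sample is cached per training observation, and the MIS proposal redraws only $h_k \sim q_\phi(\cdot|x_k)$ while leaving all other $h_j$ untouched. The acceptance ratio then compares two weights sharing the same $x_k$, so the intractable $p_\theta(x_k)$ cancels exactly, and the random-scan kernel provably leaves $\prod_j p_\theta(h_j|x_j)$ invariant. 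This persistent per-datum cache is the missing idea in your argument; the ``simpler'' recasting you chose (simpler than the wake-sleep case, as you note) is exactly what makes the kernel construction impossible to complete.
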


\begin{proof}
	This can be readily shown by first rewriting Eq.(\ref{eq:JSA_unsup_gradient}) as:
\begin{displaymath}
\left\{
\begin{split}
& \frac{1}{n} \sum_{k=1}^{n} E_{p_\theta(h_k|x_k)}\left[\nabla_\theta logp_\theta(x_k,h_k)\right] = 0 \\
& \frac{1}{n} \sum_{k=1}^{n} E_{p_\theta(h_k|x_k)}\left[ \nabla_\phi logq_\phi(h_k|x_k)\right] = 0 \\
\end{split}
\right.
\end{displaymath}	
	and then recasting it in the form of $f(\lambda) = 0$, with $\lambda \triangleq (\theta, \phi)^T$, $z \triangleq (k,h_1,\cdots,h_n)^T$, $p(z; \lambda) \triangleq \frac{1}{n} \prod_{j=1}^{n} p_\theta(h_j|x_j)$, and 
	\begin{displaymath}
	F(z; \lambda) \triangleq \left( \begin{array}{c}
	\nabla_\theta logp_\theta(x_k,h_k) \\
	\nabla_\phi logq_\phi(h_k|x_k)
	\end{array} \right),
	\end{displaymath}
	where $k$ denotes a uniform index variable over ${1,\cdots\,n}$.
\end{proof}

To apply the SA algorithm, we need to construct a Markov transition kernel $K_{\lambda}(z^{(t-1)},\cdot)$ that admits $p(z; \lambda)$ as the invariant distribution. There are many options. Particularly, we can use the Metropolis independence sampler (MIS), with $p(z; \lambda)$ as the target distribution. The proposal $q(z; \lambda)$ is defined by first drawing $k$ uniformly over ${1,\cdots\,n}$, and then only drawing $h_k \sim q_\phi(h_k|x_k)$ without changing other $h_j$ for $j \not= k$.
Given current sample $z^{(t-1)}$, MIS works as follow\footnote{We update one $h_k$ at a time so that in $\frac{w(z)}{w(z^{t-1})}$, we can cancel the intractable $p_{\theta}(x_k)$ which is appeared in the importance ratio $w(z)$. In practice, we run SA with multiple moves.
	
Note that although we can also recast Eq.(\ref{eq:JSA_unsup_gradient})
in the form of $f(\lambda) = 0$, with $\lambda \triangleq (\theta, \phi)^T$, $z \triangleq (x,h)^T$, $p(z; \lambda) \triangleq \tilde{p}(x) p_\theta(h|x) $, and 
\begin{displaymath}
F(z; \lambda) \triangleq \left( \begin{array}{c}
\nabla_\theta logp_\theta(x,h) \\
\nabla_\phi logq_\phi(h|x)
\end{array} \right),
\end{displaymath}
we can not easily construct a Markov transition kernel $K_{\lambda}(z^{(t-1)},\cdot)$ that admits $p(z; \lambda)$ as the invariant distribution. Drawing $x \sim \tilde{p}(x)$ and then using MIS to draw $p_\theta(h|x)$ with $q_\phi(h|x)$ as the proposal does not work, because the importance ratio $w(z) = \frac{p_\theta(h|x)}{q_\phi(h|x)}$ contains the intractable $p_{\theta}(x)$, and cannot be canceled out in $\frac{w(z)}{w(z^{(t-1)})}$ as $x \ne x^{(t-1)}$ in general.
}:
\begin{enumerate}
	
	\item Propose $z \sim q(z; \lambda)$, i.e. propose $k$ and then $h_k$,
	
	\item Accept $z^{t}=z$ with probability
	\begin{displaymath}
	min\left\lbrace 1, \frac{w(z)}{w(z^{(t-1)})} \right\rbrace, w(z) = \frac{p(z; \lambda)}{q(z; \lambda)} = \frac{p_\theta(h_k|x_k)}{q_\phi(h_k|x_k)}
	\end{displaymath}
\end{enumerate}

Since the parameters of the target and auxiliary models are jointly optimized based on the SA framework, the above method is referred to as \textbf{JSA learning}. 
It can be seen from the above derivation that JSA learning is general, which places no constrains on the handling of discrete variables for $x$ and $h$.
In the following, we provide more comments and comparisons with existing learning techniques.

First, note that as in JSA iterations, minimizing $KL(\tilde{p}(x) p_\theta(h|x)||\tilde{p}(x) q_\phi(h|x))$ w.r.t. $\phi$ encourages the inference model to chase the posteriori, which subsequently improves the sampling efficiency of using the inference model as the proposal for sampling the posteriori.
Also the inclusive KL divergence ensures that $q_\phi(h|x)>0$ wherever $p_\theta(h|x)>0$, which makes $q_\phi(h|x)$ a valid proposal for sampling $p_\theta(h|x)$.

Second, note that adversarial learning of GANs involves finding a Nash equilibrium to a two-player non-cooperative game. Gradient descent may fail to converge, as analyzed in \cite{imporveGAN}. 
In contrast, Eq.(\ref{eq:JSA_unsup_obj}) in JSA learning is not finding a Nash equilibrium, and thus is more stable.

Third, variational learning is to optimize the variational lower bound (V-LB) as shown in Eq. \ref{eq:VAE_obj}.
While the gradient w.r.t. $\theta$ is well-behaved, the trouble is that the gradient w.r.t. $\phi$ is known to have high variance. To address this problem, there are a lot of efforts, as discussed in Sec. \ref{sec:variational-learning}.

Fourth, note that JSA learning mainly seeks ML estimates of $\theta$, with an additional optimization over $\phi$. So JSA estimator of $\theta$ enjoys the same theoretical properties as ML estimator, even if $q_\phi$ has finite capacity. Furthermore, if both $p_\theta$ and $q_\phi$ have infinite capacity, we will obtain not only the perfect generative model but also the perfect inference model. The following proposition shows the theoretical consistency of JSA learning in the nonparametric limit.
\begin{prop}
	Suppose that $n \to \infty$, and $p_\theta(x,h)$ and $q_\phi(h|x)$ have infinite capacity, then we have
	(i) both KL divergences in Eq.(\ref{eq:JSA_unsup_obj}) can be minimized to attain zeros. 
	(ii) If both KL divergences in Eq.(\ref{eq:JSA_unsup_obj}) attain zeros at $(\theta^*, \phi^*)$, then we have $p_{\theta^*}(x) = p_0(x)$, $q_{\phi^*}(h|x) = p_{\theta^*}(h|x)$, $x \in \mathcal{X}$.
\end{prop}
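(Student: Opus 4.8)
The plan is to lean on two elementary facts. First, as $n \to \infty$ the empirical distribution converges to the truth, $\tilde{p}(x) \to p_0(x)$, so the two objectives in Eq.(\ref{eq:JSA_unsup_obj}) become $KL\left[ p_0(x) || p_\theta(x) \right]$ and $KL\left[ p_0(x) p_\theta(h|x) || p_0(x) q_\phi(h|x) \right]$ respectively. Second, the non-negativity of the KL divergence together with its equality condition: for any two distributions $a$ and $b$, $KL[a||b] \ge 0$ with equality if and only if $a = b$ almost everywhere. With these in hand the two parts follow by a short construction and a short characterization.

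For part (i) I would establish achievability by explicit construction. Because $p_\theta(x,h)$ has infinite capacity, its marginal family is unrestricted, so there exists $\theta^*$ with $p_{\theta^*}(x) = p_0(x)$, which drives the first KL to zero. Holding this $\theta^*$ fixed, the posterior $p_{\theta^*}(h|x)$ is a well-defined conditional; since $q_\phi(h|x)$ also has infinite capacity, there exists $\phi^*$ with $q_{\phi^*}(h|x) = p_{\theta^*}(h|x)$ for all $x \in \mathcal{X}$, which drives the second KL to zero. The key structural observation, which guarantees that both zeros are attainable \emph{simultaneously}, is that the two objectives do not conflict: the first constrains only the marginal of $p_\theta$, whereas the second constrains only $q_\phi$ to match the now-fixed posterior.

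For part (ii) I would invoke the equality condition of the KL divergence line by line. The first KL being zero forces $p_{\theta^*}(x) = p_0(x)$ on $\mathcal{X}$. For the second, $KL\left[ p_0(x) p_{\theta^*}(h|x) || p_0(x) q_{\phi^*}(h|x) \right] = 0$ forces the two joint integrands to coincide, namely $p_0(x) p_{\theta^*}(h|x) = p_0(x) q_{\phi^*}(h|x)$; cancelling the common factor wherever $p_0(x) > 0$ yields $q_{\phi^*}(h|x) = p_{\theta^*}(h|x)$ for every $x \in \mathcal{X}$, which is exactly the claimed characterization.

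I expect the only real subtlety, rather than a genuine obstacle, to lie in the $n \to \infty$ limit and the accompanying support bookkeeping. One must invoke the convergence $\tilde{p} \to p_0$ at the level of generality appropriate to this review, and one must note that the second KL pins down the conditional $q_{\phi^*}(h|x)$ only where $p_0(x)$ is strictly positive, which is precisely why the conclusion is asserted for $x \in \mathcal{X}$ and not for all $x$. Everything else reduces to the non-negativity of the KL divergence and its equality condition.
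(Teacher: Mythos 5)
Your proof is correct and follows the same route as the paper, whose own proof is simply the one-line remark ``by the property of the KL divergence, and $\tilde{p}(x) \to p_0(x)$ as $n \to \infty$''; your construction for part (i) and the equality-condition argument for part (ii) are exactly the details that remark leaves implicit. Your observation about the support caveat (the conditional is pinned down only where $p_0(x)>0$, hence the restriction to $x \in \mathcal{X}$) is a correct and worthwhile clarification of what the paper asserts.
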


\begin{proof}
	By the property of the KL divergence, and $\tilde{p}(x) \to p_0(x)$ as $n \to \infty$.
\end{proof}

Fifth, note that $p_\theta(x|h)$ is often termed the decoder, and $q_\phi(h|x)$ the encoder. They could be defined either with multiple stochastic hidden layers (Sec. \ref{sec:model-def-SBN}) such as SBNs \cite{Saul1996}, or with multiple deterministic hidden layers (Sec. \ref{sec:model-def-VAE}) such as in VAEs \cite{kingma2014auto-encoding}. JSA could be applied in both cases, resulting in Jsa AutoEncoders, or JAEs for short. 
A note is that for the JAEs defined in the second manner like VAEs, the storage for saving the latent codes per training observation is much reduced, as compared to the JAEs defined in the first manner like SBNs \cite{xu2016joint}.

Finally, JAEs provide a simple, consistent and principled way to handle both discrete and continuous variables in latent and observation space. 
Moreover, while structure mismatch between the encoder and decoder causes an irreducible biased gap from the data log-likelihood for VAEs, JAE learning is still consistent. 
The MIS accept/reject mechanism in JAE learning will compensate for the mismatch.
A series of experiments is conducted in \cite{JSA2}, which shows the superiority of JAEs such as being robust to structure mismatch between encoder and decoder, consistent handling of both discrete and continuous variables.

\begin{algorithm}[tb]
	\caption{\textbf{JSA learning}, represented as the SA algorithm with multiple moves}
	\label{alg:JAE}
	\begin{algorithmic}
		\REPEAT
		\STATE \underline{Monte Carlo sampling:}
		Draw a minibatch $\mathcal{M} \sim \tilde{p}(x) p_\theta(h|x)$, which could be achieved by drawing $(k,h_1,\cdots,h_n)^T \sim p(z; \lambda) \triangleq \frac{1}{n} \prod_{j=1}^{n} p_\theta(h_j|x_j)$, but update one $h_k$ at a time by MIS;
		\STATE \underline{SA updating:}
		Update $\theta$ by ascending: 
		$\frac{1}{|\mathcal{M}|} \sum_{(x,h) \sim \mathcal{M}}
		\nabla_\theta logp_\theta(x,h)$;\\
		Update $\phi$ by ascending: 
		$\frac{1}{|\mathcal{M}|} \sum_{(x,h) \sim \mathcal{M}}
		\nabla_\phi logq_\phi(h|x)$;
		\UNTIL{convergence}
	\end{algorithmic}
\end{algorithm}

\subsubsection{Adversarial learning}

Adversarial learning\footnote{However, conflicting terminology is in use: ``adversarial training'' is also used to refer to the design of neural network training to achieve insensitivity/invulnerability to the perturbation of images \cite{szegedy2013intriguing,goodfellow6572explaining,madry2017towards}.
\emph{Adversarial attack} refers to that we design slightly modified images (\emph{adversarial examples}) to fool the neural network classifier.
\emph{Adversarial defenses} are techniques that make neural networks resistant
to adversarial examples.
Adversarial training is a defense algorithm.}, exemplified by GANs \cite{goodfellow2014generative}, is mainly applied to learning implicit models, which is defined by a transformation $G_\theta(\epsilon)$ of a random noise $\epsilon$, as introduced in \ref{sec:model-def-GAN}.
The key technique is that of introducing an auxiliary model which acts like a discriminator and is optimized simultaneously.

\textbf{GAN Training.} Given a set of IID samples $\mathcal{D} = \left\lbrace x_1, \cdots, x_n \right\rbrace $ from an unknown true distribution
$p_0(x)$, in order to find $p_\theta(x)$ that best describes the true distribution, the GAN training is originally formulated as playing a two-player minimax game:
\begin{equation} \label{eq:GAN_obj}
\min_\theta \max_{\psi} \mathcal{F}_{GAN}(\theta,\psi) \triangleq E_{x \sim p_0(x)} \left[log D_\psi(x)\right]
+ E_{\epsilon \sim p(\epsilon)} \left[log \left( 1-D_\psi(G_{\theta}(\epsilon)) \right)\right],
\end{equation}
$D_\psi(x)$ represents the discriminator, which calculates the probability that $x$ comes from the data $p_0(x)$ rather than $p_\theta(x)$.
In adversarial training, we train $D_\psi(x)$ to maximize the probability of assigning the correct labels to both training examples and generated samples, and simultaneously train $G_\theta(\epsilon)$ to minimize the probability of the correct labeling of generated samples, that is, to \emph{fool} the discriminator.
For any fixed generator $G_\theta(\epsilon)$, the optimal discriminator is $D_{\psi^*}(x) = \frac{p_0(x)}{p_0(x)+p_\theta(x)}$. 
When the discriminator is optimal, the objective in Eq. \ref{eq:GAN_obj} becomes
\begin{equation} \label{eq:GAN-JS-obj}
\begin{split}
\mathcal{F}_{GAN}(\theta,\psi^*) &= E_{x \sim p_0(x)} \left[log \frac{p_0(x)}{p_0(x)+p_\theta(x)}\right]
+ E_{\epsilon \sim p(\epsilon)} \left[log \left( \frac{p_\theta(x)}{p_0(x)+p_\theta(x)} \right)\right]\\
&=-log(4) + 2 \cdot JS[p_0(x)||p_\theta(x)]
\end{split}
\end{equation}

Given enough capacity of both generator and discriminator and training time, we would like the training algorithm to find a saddle-point of the objective in Eq. \ref{eq:GAN_obj}, i.e. obtaining $p_\theta(x)=p_0(x)$.

It is noted in \cite{goodfellow2014generative} that in practice it is more advantageous to maximize $E_{\epsilon \sim p(\epsilon)} \left[log D_\psi(G_{\theta}(\epsilon)) \right]$ w.r.t. $\theta$, which is called the $log D$ trick, the non-saturating loss \cite{kurach2018gan}, or GAN-2 \cite{nowozin2016f-gan}.
This is motivated by the observation that in the early stages of training when $G_\theta(\epsilon)$ is not
sufficiently well fitted, $D_\psi(x)$ can saturate fast leading to weak gradients in $E_{\epsilon \sim p(\epsilon)} \left[log \left( 1-D_\psi(G_{\theta}(\epsilon)) \right)\right]$.
The $E_{\epsilon \sim p(\epsilon)} \left[log D_\psi(G_{\theta}(\epsilon)) \right]$ term, however, can provide stronger gradients and leads to the same fixed point.

It is shown in \cite{nowozin2016f-gan} that in the variational divergence minimization (VDM) framework, we can recover the GAN training objective and generalize it to arbitrary f-divergences.
The GAN training is a special case of this more general VDM framework, minimizing a divergence between the data distribution and generator distribution (related to the JS divergence but not exactly as in Eq. \ref{eq:GAN-JS-obj}).
As we know, the optimization criterion used has profound effect on the behavior of the optimized model \cite{theis2016a,Minka2005DivergenceMA}. This VDM framework is important for us to be able to choose the right criterion for a given application, since different applications require different trade-offs.

\textbf{$f$-divergence.} Statistical divergences such as the well-known Kullback-Leibler divergence measure the difference
between two given probability distributions.
The $f$-divergence between $p_0(x)$ and $p_\theta(x)$ on the domain $\mathcal{X}$, is defined as follows:
\begin{equation} 
\label{eq:f-divergence}
D_f\left[p_0 || p_\theta\right] \triangleq \int_{\mathcal{X}} p_\theta(x) f\left(\frac{p_0(x)}{p_\theta(x)} \right) dx.
\end{equation}
The generator function\footnote{We call it $f$-function in this paper in order to distinguish from the generator distribution.} $f:\mathcal{R}_{+} \to \mathcal{R}$ is a convex, lower-semicontinuous function satisfying $f(1)=0$, which ensures that $D_f\left[p_0 || p_0 \right]=0$ for any distribution $p_0$.
Different choices of $f$ recover a number of popular divergences as special cases in Eq. \ref{eq:f-divergence}, which can be found in \cite{nowozin2016f-gan}.

\textbf{Variational Estimation of $f$-divergence.}
Every convex, lower-semicontinuous function $f$ has a convex (Fenchel) conjugate function $f^\dagger$.
The pair $(f, f^\dagger)$ is dual to another in the sense that $f^{\dagger \dagger} = f$. We have
\begin{displaymath}
\left\{
\begin{split}
& f^\dagger(t) = \sup_{u \in dom_f} \left\lbrace ut-f(u)\right\rbrace  \\
& f(u) = \sup_{t \in dom_{f^\dagger}} \left\lbrace tu-f^\dagger(t)\right\rbrace \\
\end{split}
\right.
\end{displaymath}

Leveraging the above variational representation of $f$, we obtain a lower bound on the $f$-divergence:
\begin{equation} \label{eq:f-gan-bound}
\begin{split}
D_f\left[p_0 || p_\theta\right]&= \int_{\mathcal{X}} p_\theta(x) \sup_{t \in dom_{f^\dagger}} \left\lbrace t\frac{p_0(x)}{p_\theta(x)}-f^\dagger(t)\right\rbrace dx\\
&\geq \sup_{T \in \mathcal{T}} \left(\int_{\mathcal{X}} p_0(x) T(x) dx - \int_{\mathcal{X}} p_\theta(x) f^\dagger(T(x)) dx \right)\\
&=\sup_{T \in \mathcal{T}} \left(E_{p_0(x)} \left[T(x)\right] - E_{p_\theta(x)} \left[f^\dagger(T(x))\right]\right)
\end{split}
\end{equation}
where $T$ is an arbitrary class of functions $T: \mathcal{X} \to dom_{f^\dagger}$. The above derivation yields a lower bound
for two reasons: first, because of Jensen’s inequality when swapping the integration and supremum
operations. Second, the class of functions $T$ may contain only a subset of all possible functions.
By taking the variation of the lower bound in Eq. \ref{eq:f-gan-bound} w.r.t. $T$, we find that under mild conditions on $f$, the bound is tight for
\begin{equation} \label{eq:f-gan-optimal-T}
T^*(x) = f'\left(\frac{p_0(x)}{p_\theta(x)} \right),
\end{equation}
where $f'$ denotes the first-order derivative of $f$.

\textbf{Learning by Variational Divergence Minimization (VDM).}
We now can use the variational lower bound Eq. \ref{eq:f-gan-bound} on the $f$-divergence $D_f\left[p_0 || p_\theta\right]$ to estimate the implicit generative model $p_\theta(x)$ given the true distribution $p_0(x)$.
For this purpose, $T$ is the variational function, taking as input a sample and returning a scalar. We parametrize $T$ using a vector $\psi$ and write $T_\psi(x)$.
In \textbf{VDM learning}, we learn $p_\theta(x)$ by finding a saddle-point of the following objective function, where we minimize w.r.t. $\theta$ and maximize w.r.t. $\psi$,
\begin{equation} \label{eq:f-gan-obj1}
\min_\theta \max_{\psi} \mathcal{F}_f(\theta,\psi) \triangleq
E_{p_0(x)} \left[T_\psi(x)\right] - E_{p_\theta(x)} \left[f^\dagger(T_\psi(x))\right]
\end{equation}

Setting to zeros the gradients of $\mathcal{F}_f(\theta,\psi)$ w.r.t. $(\theta,\psi)$ and finding the root for the resulting system of simultaneous equations are necessary to finding the saddle points of $\mathcal{F}(\theta,\psi)$.
So we can apply the SA algorithm, as shown in Algorithm \ref{alg:VDM}, to find its root. This is also the algorithm that the Single-Step Gradient Method proposed in \cite{nowozin2016f-gan} takes.

\begin{algorithm}[tb]
	\caption{\textbf{VDM learning}, represented as the SA algorithm with multiple moves}
	\label{alg:VDM}
	\begin{algorithmic}
		\REPEAT
		\STATE \underline{Monte Carlo sampling:}
		Draw a empirical minibatch $\mathcal{M} \sim {p_0}(x)$, a generated minibatch $\mathcal{E} \sim p(\epsilon)$ so that $G_\theta(\epsilon) \sim p_\theta(x)$;
		\STATE \underline{SA updating:}\\
		Update $\psi$ by ascending:
		$\frac{1}{|\mathcal{M}|} \sum_{x \sim \mathcal{M}} \nabla_\psi T_\psi(x)
		-\frac{1}{|\mathcal{E}|} \sum_{\epsilon \sim \mathcal{E}} \nabla_\psi f^\dagger\left(T_\psi(G_\theta(\epsilon))\right)$;\\
		Update $\theta$ by descending: 
		$- \frac{1}{|\mathcal{E}|} \sum_{\epsilon \sim \mathcal{E}}
		\nabla_\theta f^\dagger\left(T_\psi(G_\theta(\epsilon))\right)$;
		\UNTIL{convergence}
	\end{algorithmic}
\end{algorithm}

\textbf{Representation for the Variational Function $T_\psi$.}
To apply the above variational objective, we need to respect the domain $dom_{f^\dagger}$ of the conjugate functions $f^\dagger$. To this end, we \emph{assume} that variational function $T_\psi$ is
represented in the form $T_\psi(x)=g_f(V_\psi(x))$ and rewrite the saddle objective Eq. \ref{eq:f-gan-obj1} as follows:
\begin{equation} \label{eq:f-gan-obj2}
\mathcal{F}_f(\theta,\psi) \triangleq
E_{p_0(x)} \left[g_f(V_\psi(x))\right] - E_{p_\theta(x)} \left[f^\dagger(g_f(V_\psi(x)))\right]
\end{equation}
where $V_\psi: \mathcal{X} \to \mathcal{R}$ without any range constraints on the output, and $g_f: \mathcal{R} \to dom_{f^\dagger}$ is an output activation function specific to the $f$-divergence used. 
Although not mandatory, we usually choose $g_f$ to be monotone increasing functions so that a large output $V_\psi(x)$ corresponds to the belief of the variational function that the sample $x$ comes from the data distribution $p_0$ as in the GAN case.
It is also instructive to look at the second term
$f^\dagger(g_f(V_\psi(x)))$ in the above saddle objective. 
This term is typically (except for the Pearson $\chi^2$ divergence) a decreasing function of the output $V_\psi(x)$.

Further, note that Eq. \ref{eq:f-gan-optimal-T} shows the link between the variational function $T_\psi(x)$ and the density ratio $\frac{p_0(x)}{p_\theta(x)}$. The critical value $f'(1)$ can be interpreted as a classification threshold applied to $T(x)$ to distinguish between true and generated samples.
We can interpret the output of the variational function $T_\psi(x)$ as classifying the input $x$ as a true sample if the variational function $T_\psi(x)$ is larger than $f'(1)$, and classifying it as a sample from the generator otherwise.

In the following, we present three specific cases of $f$-divergence for VDM learning, which correspond to the GAN objective, KL-divergence and reverse-KL-divergence, and then introduce a principled method to choose $g_f(v)$.
\begin{itemize}
\item \textbf{VDM learning using the GAN objective recovers the GAN training.}

Choosing $f$-function as $f(u) = u log u - (u+1)log(u+1)$, then we have the conjugate $f^\dagger(t) = -log(1-e^{t})$ with $dom_{f^\dagger} = \mathcal{R}_{-}$.
Substituting into the $f$-divergence definition (Eq. \ref{eq:f-divergence}), we have the following specific $f$-divergence
\begin{displaymath}
D_f\left[p_0 || p_\theta\right] = E_{p_0(x)} \left[log \frac{p_0(x)}{p_0(x)+p_\theta(x)}\right]
+ E_{p_\theta(x)} \left[log \left( \frac{p_\theta(x)}{p_0(x)+p_\theta(x)} \right)\right].
\end{displaymath}
Further, choosing $g_f(v)=-log(1+e^{-v})=log \sigma(v)$ respects the right $dom_{f^\dagger} = \mathcal{R}_{-}$, and $T_\psi(x)=log \sigma(V_\psi(x))$.

Substituting these into the VDM objective (Eq. \ref{eq:f-gan-obj2}), we have
\begin{displaymath}
\begin{split}
\mathcal{F}_{GAN}(\theta,\psi) &=
E_{p_0(x)} \left[ -log\left(1+e^{-V_\psi(x)} \right) \right] 
- E_{p_\theta(x)} \left[ -log \left(1-e^{-log \left( 1+e^{-V_\psi(x)}  \right)} \right) \right]\\
&= E_{p_0(x)} \left[log D_\psi(x)\right]
+ E_{p_\theta(x)} \left[log \left( 1-D_\psi(x) \right)\right],
\end{split}
\end{displaymath}
where we define $D_\psi(x) \triangleq \frac{1}{1+e^{-V_\psi(x)}}$, and thus have $V_\psi(x) = log \frac{D_\psi(x)}{1-D_\psi(x)}$. The above exactly corresponds to the GAN objective (Eq. \ref{eq:GAN_obj}).

To interpret $V_\psi(x)$, according to Eq. \ref{eq:f-gan-optimal-T}, we have the optimal variational function
\begin{displaymath}
T^*(x) = f'\left(\frac{p_0(x)}{p_\theta(x)} \right) 
= log \left( \frac{p_0(x)}{p_0(x)+p_\theta(x)} \right),
\end{displaymath}
which gives the optimal $V_{\psi^*}(x) = log \frac{p_0(x)}{p_\theta(x)}$. This helps us to understand the discriminative meaning of $V_{\psi}(x)$, apart from being the variational function.

\item \textbf{VDM learning using the KL-divergence.}

Choosing $f$-function as $f(u) = u log u$, then we have the conjugate $f^\dagger(t) = e^{t-1}$ with $dom_{f^\dagger} = \mathcal{R}$.
Substituting into the $f$-divergence definition (Eq. \ref{eq:f-divergence}), we have the following specific $f$-divergence, which is exactly the KL-divergence
\begin{displaymath}
D_f\left[p_0 || p_\theta\right] = E_{p_0(x)} \left[log \frac{p_0(x)}{p_\theta(x)}\right].
\end{displaymath}
Further, choosing $g_f(v)=v$ respects the right $dom_{f^\dagger} = \mathcal{R}$, and $T_\psi(x)=V_\psi(x)$.

Substituting these into the VDM objective (Eq. \ref{eq:f-gan-obj2}), we have
\begin{displaymath}
\mathcal{F}_{KL}(\theta,\psi) =
E_{p_0(x)} \left[ V_\psi(x) \right] 
- E_{p_\theta(x)} \left[ e^{V_\psi(x)-1} \right].
\end{displaymath}

To interpret $V_\psi(x)$, according to Eq. \ref{eq:f-gan-optimal-T}, we have the optimal variational function
\begin{displaymath}
T^*(x) = f'\left(\frac{p_0(x)}{p_\theta(x)} \right) 
= 1+log \frac{p_0(x)}{p_\theta(x)},
\end{displaymath}
which gives the optimal $V_{\psi^*}(x) = 1+log \frac{p_0(x)}{p_\theta(x)}$.
This helps us to understand the discriminative meaning of $V_{\psi}(x)$, apart from being the variational function.

Additionally, it is instructive to show that choosing $g_f(v)=v+1$ is also valid. Then we have $T_\psi(x)=V_\psi(x)+1$, and 
\begin{displaymath}
\mathcal{F}_{KL}(\theta,\psi) =
E_{p_0(x)} \left[ V_\psi(x)+1 \right] 
- E_{p_\theta(x)} \left[ e^{V_\psi(x)} \right].
\end{displaymath}
In this case, the optimal $V_{\psi^*}(x) = log \frac{p_0(x)}{p_\theta(x)}$ directly equals to the log density ratio.

\item \textbf{VDM learning using the reverse-KL-divergence.}

Choosing $f$-function as $f(u) = - log u$, then we have the conjugate $f^\dagger(t) = -1-log(-t)$ with $dom_{f^\dagger} = \mathcal{R}_{-}$.
Substituting into the $f$-divergence definition (Eq. \ref{eq:f-divergence}), we have the following specific $f$-divergence, which is exactly the reverse-KL-divergence
\begin{displaymath}
D_f\left[p_\theta || p_0\right] = E_{p_\theta(x)} \left[log \frac{p_\theta(x)}{p_0(x)}\right].
\end{displaymath}
Further, choosing $g_f(v)=-e^{-v}$ respects the right $dom_{f^\dagger} = \mathcal{R}_{-}$, and $T_\psi(x)=-e^{-V_\psi(x)}$.

Substituting these into the VDM objective (Eq. \ref{eq:f-gan-obj2}), we have
\begin{displaymath}
\mathcal{F}_{rKL}(\theta,\psi) =
E_{p_0(x)} \left[ -e^{-V_\psi(x)} \right] 
- E_{p_\theta(x)} \left[ -1 + V_\psi(x) \right].
\end{displaymath}

To interpret $V_\psi(x)$, according to Eq. \ref{eq:f-gan-optimal-T}, we have the optimal variational function
\begin{displaymath}
T^*(x) = f'\left(\frac{p_0(x)}{p_\theta(x)} \right) 
= -\frac{p_\theta(x)}{p_0(x)},
\end{displaymath}
which gives the optimal $V_{\psi^*}(x) = log\frac{p_0(x)}{p_\theta(x)}$.
This helps us to understand the discriminative meaning of $V_{\psi}(x)$, apart from being the variational function.

\item \textbf{A principled method to choose $g_f(v)$.}

The $g_f: \mathcal{R} \to dom_{f^\dagger}$ is not uniquely determined. 
Here we provide a principled method to choose $g_f(v)$, by which you can easily obtain the choice of $g_f$ in Table 2 in \cite{nowozin2016f-gan}, which seems to be arbitrary. This method is concurrently appeared in \cite{CAL-GAN}.

This method is based on a property for convex and differentiable functions (Section 3.3.2 in \cite{convex-opt-book}). According to this property, we have
\begin{displaymath}\label{eq:conjugate-formula}
f^\dagger (f^\prime(u) ) = u f^\prime (u) - f(u),
\end{displaymath}
for the $f:\mathcal{R}_{+} \to \mathcal{R}$ in defining $f$-divergence, which is convex and differentiable.
Rewriting the $f$-GAN objective Eq. \ref{eq:f-gan-obj2} as follow:
\begin{equation} \label{eq:f-gan-obj3}
\mathcal{F}_f(\theta,\psi) =
E_{p_0(x)} \left[g_f(v)\right] - E_{p_\theta(x)} \left[f^\dagger(g_f(v))\right]~\text{with}~v=V_\psi(x),
\end{equation}
we see that a natural choice for $g_f(v)$ is to let
\begin{displaymath}
g_f(v) = \left\lbrace  f'(u) \right\rbrace |u=e^v.
\end{displaymath}
Substituting this choice into Eq. \ref{eq:f-gan-obj3}, we have
\begin{equation} \label{eq:f-gan-obj4}
\mathcal{F}_f(\theta,\psi) =
E_{p_0(x)} \left[ f'(e^v) \right] - E_{p_\theta(x)} \left[ e^v f'(e^v) - f(e^v) \right]~\text{with}~v=V_\psi(x).
\end{equation}
Further, according to Eq. \ref{eq:f-gan-optimal-T}, we have the optimal variational function
\begin{displaymath}
g_f(V_{\psi^*}(x)) = f'\left(\frac{p_0(x)}{p_\theta(x)} \right),
\end{displaymath}
which gives $V_{\psi^*}(x) = log\frac{p_0(x)}{p_\theta(x)}$.

\end{itemize}

The above gives the basics of adversarial learning, from the classic GAN training to the general VDM learning.
In the following, we present existing problems, recent advances and open questions related to adversarial learning.

\begin{enumerate}
	\item
	\textbf{Improve/Stabilize training, by exploring different training criteria.}
	
	Training GANs is well known for being delicate and unstable, for reasons theoretically investigated in \cite{Arjovsky2017TowardsPM}.
	It is noted that the various ways to measure how close the real distribution $p_0$ and the model distribution $p_\theta$ are, or equivalently, the various ways to define a divergence $D\left[p_0 || p_\theta\right]$ have different impact on the convergence of sequences of probability distributions.
	It is suggested to minimize the Wasserstein distance between the data distribution and the generator distribution, used by \textbf{Wasserstein GANs (W-GANs)} \cite{arjovsky2017wasserstein}.
	The W-GAN minimizes
\begin{equation} \label{eq:wgan-obj}
	\min_\theta \max_{||T_\psi||_L \le 1} \mathcal{F}_{WGAN}(\theta,\psi) \triangleq
	E_{x \sim p_0(x)} \left[T_\psi(x)\right] - E_{\epsilon \sim p(\epsilon)} \left[T_\psi(G_\theta(\epsilon))\right]
\end{equation}
	where $T_\psi(x)$ is called the \emph{critic}\footnote{Note that $T_\psi(x)$ is also called the ``discriminator'' in some papers, although it is actually a real-valued function and not a classifier at all.} or \emph{value function} in WGANs.
	The maximum is taken over the set of 1-Lipschitz functions.

	In addition to objective functions, adverarial training involves a number of considerations, including regularization schemes, network architectures, etc \cite{kurach2018gan}. In the following we introduce some recent strong developments.

	\item
	\textbf{Improve/Stabilize training, by exploring different regularizations.}
	\begin{itemize}
		\item 
			\textbf{Spectral Normalization GANs (SN-GANs)} \cite{Miyato2018SpectralNF}. The motivation is to introduce some form of restriction to the choice of the discriminator, namely to regularize the discriminator. The core is a new weight normalization technique, which normalizes the spectral norms of all weight matrices in the discriminator network.
		This is equivalent to constrain the discriminator to be $K$-Lipschitz continuous ($K=1$)\footnote{The Lipschitz constant of a general differentiable function $f:\mathcal{R}^n \rightarrow \mathcal{R}^m$ is the maximum spectral norm (maximum singular value) of its gradient operator $\nabla f$ (which is essentially a matrix operator).
			The maximum singular value of a matrix $A$ is equal to the operator norm of $A$.}.
		It is shown in \cite{Miyato2018SpectralNF} that when \emph{regularizing the discriminator}, SN outperforms other regularization/normalization techniques in terms of superior inception scores and FIDs, including weight clipping \cite{Arjovsky2017TowardsPM}, WGAN-GP/gradient penalty \cite{Gulrajani2017ImprovedTO}, batch-normalization (BN) \cite{Ioffe2015BatchNA}, layer normalization (LN) \cite{ba2016layer}, weight normalization (WN) \cite{salimans2016weight} and orthonormal regularization \cite{brock2016neural}.
	\end{itemize}
	
	\item
	\textbf{Improve/Stabilize training, by exploring different network architectures.}
	\begin{itemize}
	\item
	A new discriminator network architecture, called the \textbf{projection-based discriminator}, is proposed in \cite{miyato2018cgans} for conditional GANs (cGANs).
	This is in contrast with most conditional GANs, in which the discriminator uses the conditional information by concatenating the (embedded) conditional vector to the feature vectors.
	Combining the projection-based discriminator and spectral normalization of the discriminator greatly improves class-conditional image generation.
	For the first time, GANs (acting as cGANs) learn to generate high quality images from the full 1000-class $128 \times 128$ ImageNet dataset data with only one generator-discriminator pair \cite{miyato2018cgans}\footnote{For auxiliary classifier (AC) GANs \cite{odena2016conditional}, the authors prepared a pair of discriminator and generator for each set classes of size 10.}.

	\item
	\textbf{Self-Attention GANs (SA-GANs)} \cite{zhang2018self}.
	In image generation, convolutional GANs could easily generate images with a simpler geometry like Ocean, Sky etc., but failed on images with some specific geometry like dogs, horses and many more. Convolutional GANs were able to generate the texture of furs of dog but were unable to generate distinct legs. 
	This problem is arising because the convolution is a local operation whose receptive field depends on the spatial size of the kernel. 
	In a convolution operation, it is not possible for an output on the top-left position to have any relation to the output at bottom-right. 
	The motivation of SA-GANs is to create a balance between long-range dependencies(= large receptive fields) and efficiency via self-attention.
	SA-GANS used this self-attention layer in \emph{both the generator and discriminator}.
	Moreover, 
	SA-GANs applied spectral normalization (SN) to the weights in \emph{both generator and discriminator}, unlike the previous paper \cite{Miyato2018SpectralNF} which only normalizes the discriminator weights.	
	SA-GANs used a two-timescale update rule (TTUR) \cite{heusel2017gans} which is simply using different learning rate for both discriminator and generator.
	Using self-attention, spectral normalization, TTUR, and projection-based discriminator in combination, SA-GANs (acting as cGANs) further beat \cite{miyato2018cgans} in class-conditional image generation on ImageNet.
		
	\end{itemize}

	\item
	\textbf{Lack of inference mechanism.}
	
Remarkably, GANs lack the ability to infer the latent variable given the observation, and this limitation has been addressed by some recent studies ALI \cite{dumoulin2016adversarially}, BiGAN \cite{Donahue2016AdversarialFL}, WAE \cite{Tolstikhin2017WassersteinA}.

	\item
	\textbf{Handling of discrete latent variables.}

Learning GANs with discrete hidden variables remains unexplored. 
	
	\item
	\textbf{Handling of discrete observations.}

For GANs to work with discrete observations, it is difficult to propagating gradients back from the discriminator through the generated samples to the generator.
The application of GANs to discrete data is restricted yet with some efforts. 
\cite{gumbelgan} resorts to the Gumbel-softmax distribution; \cite{yu2016seqgan} (SeqGAN) and \cite{maligan} (MaliGAN) models the generation of the discrete sequence as a stochastic policy in reinforcement learning and perform gradient policy update.
	
	\item
	\textbf{Conditional generation.}
	\begin{itemize}
	\item		
	Beyond of learning (unconditional) implicit generative models, adversarial learning has also been successfully applied in the conditional generation setting, i.e. learning conditional distributions $p_\theta(x|c)$, or intuitively say, mappings.
	GANs are implicit generative models that learn a mapping from random noise vector $\epsilon$ to output $x$: $G_\theta (\cdot) : \epsilon \to x$. In contrast, \textbf{conditional GANs (cGANs)} learn a mapping from random noise vector $\epsilon$ and additional input $c$ to $x$: $G_\theta (\cdot) : (c, \epsilon) \to x$ \cite{mirza2014conditional}. 
	$x$ could be any kind of auxiliary/side information, such as class labels or data from other modalities.
	The objective of a conditional GAN can be expressed as:
	\begin{displaymath}
	\mathcal{F}_{cGAN}(\theta,\psi) \triangleq E_{(c,x) \sim p_0(c,x)} \left[log D_\psi(c,x)\right]
	+ E_{c \sim p_0(c), \epsilon \sim p(\epsilon)} \left[log \left( 1-D_\psi(c,G_{\theta}(c,\epsilon)) \right)\right].
	\end{displaymath}
	It can be seen that the above objective arises naturally from solving the following optimization based on VDM
	\begin{displaymath}
	\min_\theta E_{c \sim p_0(c)} \left\lbrace  D_{GAN}\left[p_0(x|c) || p_\theta(x|c) \right] \right\rbrace 
	\end{displaymath}
	The techniques of improve/stabilize training as introduced in the above unconditional setting mostly applies in the conditional setting.
	Important new issues involve the network architectures in implementing the conditional generator $G_\theta(c,\epsilon)$ and the discriminator $D_\psi(c,x)$.
	The discriminator in cGANs uses the conditional information $c$ by concatenating the (embedded) conditional vector $c$ and the input observation $x$.
	A new discriminator network architecture, called the projection-based discriminator, is proposed in \cite{miyato2018cgans} for conditional GANs, as we introduce before.
	Besides class-conditional image generation, many other tasks has been addressed in learning mappings with cGANs, such as image-to-image translation \cite{isola2017image}.
	\item
	Instead of concatenating class label $c$ and input observation $x$ to the discriminator as used in cGANs, \textbf{Auxiliary Classifier (AC) GANs} \cite{odena2016conditional} augment the discriminator to classify the observation $x$ into one of $K$ possible classes, in addition to discriminate real/fake samples. 
	In implementation, the output layer of $D_\psi(c,x)$ is of size $K+1$, which consists of a Softmax output over $K$ of the units ($1,\cdots, K$) and a Sigmoid output to the remaining unit $K+1$ (representing the probability of the sample being true). The objective function has two parts: the log-likelihood of the correct source, $\mathcal{F}_S$, and the log-likelihood of the correct class, $\mathcal{F}_C$.
	\begin{displaymath}
	\begin{split}
	\mathcal{F}_S &= E_{(c,x) \sim p_0(c,x)} \left[log D_\psi(K+1,x)\right]
	+ E_{c \sim p_0(c), \epsilon \sim p(\epsilon)} \left[log \left( 1-D_\psi(K+1,G_{\theta}(c,\epsilon)) \right)\right]\\
	\mathcal{F}_C &= E_{(c,x) \sim p_0(c,x)} \left[log D_\psi(c,x)\right]
	+ E_{c \sim p_0(c), \epsilon \sim p(\epsilon)} \left[log D_\psi(c,G_{\theta}(c,\epsilon)) \right]
	\end{split}
	\end{displaymath}
	$D_\psi$ is trained to maximize $\mathcal{F}_S+\mathcal{F}_C$, while $G_\theta$ is trained to maximize $-\mathcal{F}_S+\mathcal{F}_C$.
	Remarkably, AC-GANs learn a representation for $\epsilon$ that is independent of class label.
	\end{itemize}

\end{enumerate}

\section{Learning with Deep Undirected Models}

Generally speaking, there are two broad classes of DGMs - directed and undirected.
There emerged a bundle of deep directed generative models, such as Helmholtz Machines \cite{dayan1995helmholtz}, Variational Autoencoders (VAEs) \cite{kingma2014auto-encoding}, Generative Adversarial Networks (GANs) \cite{goodfellow2014generative}, auto-regressive neural networks \cite{Larochelle2011TheNA} and so on.
In contrast, undirected generative models (also known as random fields \cite{koller2009probabilistic}, energy-based models \cite{energy-based}), e.g. Deep Boltzmann Machines (DBMs) \cite{dbm}, received less attentions with slow progress.
This is presumably because fitting undirected models is more challenging than fitting directed models \cite{koller2009probabilistic}.
In general, calculating the log-likelihood and its gradient is analytically intractable, because this involves evaluating the normalizing constant (also called the partition function in physics) and, respectively, the expectation with respect to the model distribution.

\subsection{Model Definition}

Roughly speaking, there are two types of deep undirected models (deep random fields) in the literature. Those with multiple stochastic hidden layers such as deep belief networks (DBNs) \cite{hinton2006a} and deep Boltzmann machines (DBMs) \cite{dbm} involve very difficult inference and learning, which severely limits their applications beyond of the form of pre-training.
Another type, which appears to be more successfully applied,  
is to directly define the potential function\footnote{Negating the potential function defines the energy function.} through a deep neural network. In this case, the layers of the network do not represent latent variables but rather are deterministic transformations of input observations. 

The second type of deep random fields has been proposed several times in different contexts. They are once called deep energy models (DEMs) in \cite{ng11,Kim2016DeepDG}, descriptive models in \cite{descriptor,coopnets}, generative ConvNet in \cite{wyn15}, neural trans-dimensional random field language models in \cite{asru,Bin2018,wang2018improved}, neural random fields in \cite{song2018learning}. 
There are some specific differences between definitions of these models. \cite{Kim2016DeepDG} includes linear and squared terms in $u_{\theta}(x)$, \cite{asru} defines over sequences, and \cite{coopnets} defines in the form of exponential tilting of a reference distribution (a Gaussian white noise distribution).
However, when presenting learning algorithms, these differences would not affect much.
In the following, these models are collectively referred to as random fields (RFs).

Generally, consider a random field for modeling observation $x$ with parameter $\theta$:
\begin{equation}\label{eq:unsup-RF}
p_{\theta}(x)=\frac{1}{Z(\theta)} \exp\left[  u_{\theta}(x) \right] 
\end{equation}
where $Z(\theta)=\int\exp(f(x;\theta))dx$ is the normalizing constant, $u_{\theta}(x)$ is the potential function which assigns a scalar value to each configuration of random variable $x$.
The general idea of defining a neural random field (NRF) is to implement $u_{\theta}(x)$ by a neural network, taking $x$ as input and outputting $u_{\theta}(x)$, so that we can take advantage of the representation power of neural networks.
It is usually intractable to maximize the data log-likelihood $log p_\theta(\tilde{x})$ for observed $\tilde{x}$, since the gradient involves expectation w.r.t. the model distribution, as shown below:
\begin{displaymath}
\nabla_\theta \log{p}_{\theta}(\tilde{x})=\nabla_\theta u_{\theta}(\tilde{x})-E_{p_\theta(x)}\left[\nabla_\theta u_{\theta}(x)\right]
\end{displaymath}

\subsection{Model Learning}

There is an extensive literature devoted to maximum likelihood (ML) learning of random fields (RFs).
An important class of RF learning methods is stochastic approximation (SA) methods \cite{SA51}, which approximates the model expectations by Monte Carlo sampling for calculating the gradient. 
Basically, SA training iterates Monte Carlo sampling and SA update of parameters.
The classic algorithm, initially proposed in \cite{younes1989parametric}, is often called \textbf{stochastic maximum likelihood (SML)}.
In the literature on training restricted Boltzmann machines (RBMs), SML is also known as persistent contrastive divergence (PCD) \cite{tieleman2008training} to emphasize that the Markov chain is not reset between parameter updates.

For both types of deep random fields, the classic learning algorithm is SML.
We are mainly concerned with studying the second type of deep random fields in this review,  which has been more successfully applied.
In \cite{ng11}, contrastive divergence (CD) \cite{cd} is used for training DEMs, which, in contrast to PCD, is biased in theory since it performs Monte Carlo sampling from the training observations.
A recent progress as studied in \cite{Kim2016DeepDG,asru,coopnets} is to pair the target random field with an auxiliary directed generative model (often called generator), which approximates the target random field but is easy to do sampling\footnote{Ancenstral sampling from a directed graphical model is straightforward.}.
Learning is performed by maximizing the target data log-likelihood and simultaneously minimizes some divergence between the target random field and the auxiliary generator.

The learning setting is the same as in learning with deep directed Models - Maximum likelihood learning.
Suppose that data $\mathcal{D} = \left\lbrace \tilde{x}_1, \cdots, \tilde{x}_n \right\rbrace $, consisting of $n$ IID observations, are drawn from the true but unknown data distribution $p_0(\cdot)$.
$\tilde{p}(\tilde{x}) \triangleq \frac{1}{n} \sum_{k=1}^{n} \delta(\tilde{x} - \tilde{x}_n)$ denotes the empirical data distribution.
There exist several representative classes of learning algorithms, as introduced below.
Also note that these algorithms admit natural extensions to conditional random field (CRF) undirected models.

\subsubsection{Stochastic Maximum Likelihood (SML) learning}

By setting the gradients to zeros, the maximum learning problem Eq. \ref{eq:MLE} can be solved by finding the root for the following equation:
\begin{equation}  \label{eq:SML}
E_{\tilde{p}(\tilde{x})}\left[ \nabla_\theta logp_\theta(\tilde{x})\right]
=E_{\tilde{p}(\tilde{x})}\left[\nabla_\theta u_\theta(\tilde{x})\right]-E_{p_\theta(x)}\left[\nabla_\theta u_\theta(x)\right]=0.
\end{equation}

It can be shown that Eq.(\ref{eq:SML}) exactly follows the form of Eq.(\ref{eq:SA}), so that we can apply the SA algorithm, as shown in Algorithm \ref{alg:VAE}, to find its root and thus solve the optimization problem Eq.(\ref{eq:SML}).
A crucial step in the classic SML algorithm is that we need to draw samples from the target random field $p_\theta(x)$.

\begin{algorithm}[tb]
	\caption{\textbf{SML learning}, represented as the SA algorithm with multiple moves}
	\label{alg:VAE}
	\begin{algorithmic}
		\REPEAT
		\STATE \underline{Monte Carlo sampling:}
		Draw an empirical minibatch $\mathcal{E} \sim \tilde{p}(\tilde{x})$ and a Monte Carlo minibatch $\mathcal{M} \sim p_\theta(x)$;
		\STATE \underline{SA updating:}
		Update $\theta$ by ascending: 
		$\frac{1}{|\mathcal{E}|} \sum_{\tilde{x} \sim \mathcal{E}}
		\nabla_\theta u_\theta(\tilde{x}) - \frac{1}{|\mathcal{M}|} \sum_{x \sim \mathcal{M}}
		\nabla_\theta u_\theta(x)$;
		\UNTIL{convergence}
	\end{algorithmic}
\end{algorithm}

\subsubsection{Learning NRFs with Exclusive Auxiliary Generators (Exclusive-NRFs)}

We can pair the target random field $p_{\theta}(x)$ with an auxiliary generator $q_\phi(x)$ and jointly train the two models.
This idea has been studied in \cite{Kim2016DeepDG} by minimizing the \emph{exclusive} KL divergence $KL(q_\phi(x)||p_\theta(x))$ w.r.t. $\phi$, and in \cite{asru} by minimizing the \emph{inclusive} KL divergence $KL(p_\theta(x)||q_\phi(x))$ w.r.t. $\phi$, in addition to maximizing w.r.t. $\theta$ the data log-likelihood.

The generator $q_\phi$, which is usually a directed generative model (Section \ref{sec:directed-model-def}), could be implemented as an implicit density like in Eq. \ref{eq:GAN} \cite{Kim2016DeepDG}, a prescribed density $q_\phi(x,h) \triangleq q(h)q_\phi(x|h)$ \cite{song2018learning}, or an autoregressive model like an LSTM-RNN \cite{asru}.

For learning a NRF $p_\theta(x)$ with an exclusive auxiliary generator $q_\phi(x)$, we jointly optimize
\begin{equation}
\label{eq:EAG}
\left\{
\begin{split}
& \min_{\theta} KL\left[  \tilde{p}(\tilde{x}) || p_\theta(\tilde{x}) \right] \\
& \min_{\phi} KL\left[  q_\phi(x) || p_\theta(x) \right] \\
\end{split}
\right.
\end{equation}
By setting the gradients to zeros, the above optimization problem can be solved by finding the root for the following system of simultaneous equations:
\begin{equation}
\label{eq:EAG_gradient}
\left\{
\begin{split}
& E_{\tilde{p}(\tilde{x})}\left[\nabla_\theta logp_\theta(\tilde{x})\right]
= E_{\tilde{p}(\tilde{x})}\left[\nabla_\theta u_\theta(\tilde{x})\right]-E_{p_\theta(x)}\left[\nabla_\theta u_\theta(x)\right]=0\\
&\nabla_\phi \left\lbrace  -E_{q_\phi(x)}\left[ logp_\theta(x)\right]
- H \left[ q_\phi(x) \right] \right\rbrace =0\\
\end{split}
\right.
\end{equation}
The exclusive divergence includes a reconstruction term and an entropy term.
Fortunately, we have $\nabla_\phi logp_\theta(x) = \nabla_\phi u_\theta(x)$, i.e. this gradient does not depend on the partition function of the target random field, and we can use the reparameterization trick to calculate the gradient of the reconstruction term w.r.t. $\phi$.
Unfortunately, calculating the gradient of the entropy term is generally intractable, and it is not easy to draw samples from the target random field $p_\theta(x)$.

In \cite{Kim2016DeepDG}, two approximations are used to handle the two annoying terms.
First, $E_{p_\theta(x)}\left[\nabla_\theta u_\theta(x)\right]$ is approximated by $  E_{q_\phi(x)}\left[\nabla_\theta u_\theta(x)\right]$.
Second, note that the batch normalization \cite{Ioffe2015BatchNA} used in implementing $q_\phi(x)$ maps each activation $a_i$ into an approximately normal distribution with trainable mean-related shift parameter $\mu_{a_i}$ and variance-related scale parameter $\sigma^2_{a_i}$, and the entropy of the normal distribution over each activation can be measured analytically
$H[\mathcal{N}(\mu_{a_i}, \sigma^2_{a_i})] = \frac{1}{2}log(2 e \pi \sigma^2_{a_i})$.
It is assumed that this (internal) entropy could effect the entropy of the generator (external) distribution $H \left[ q_\phi(x) \right]$, which therefore could be approximated as: 
\begin{displaymath}
H \left[ q_\phi(x) \right] \approx \sum_{a_i} H[\mathcal{N}(\mu_{a_i}, \sigma_{a_i})],
\end{displaymath}
where the sum is over all activations in the generator's network.


\subsubsection{Learning NRFs with Inclusive Auxiliary Generators (Inclusive-NRFs)}

As shown above, the exclusive divergence includes a reconstruction term and an entropy term, which resembles the expression of the exclusive KL divergence in the variational learning. The entropy term is analytically intractable, so an ad hoc approximation is used in \cite{Kim2016DeepDG} without strict justification.
In contrast, leaning with inclusive auxiliary generator will not have such annoying entropy term.

For learning a NRF $p_\theta(x)$ with an inclusive auxiliary generator, e.g. implemented as a prescribed density $q_\phi(x,h) \triangleq q(h)q_\phi(x|h)$ \cite{song2018learning}, we jointly optimize
\begin{equation}
\label{eq:jrf_unsup_obj}
\left\{
\begin{split}
& \min_{\theta} KL\left[  \tilde{p}(\tilde{x}) || p_\theta(\tilde{x}) \right] \\
& \min_{\phi} KL\left[  p_\theta(x) || q_\phi(x) \right] \\
\end{split}
\right.
\end{equation}
By setting the gradients to zeros, the above optimization problem can be solved by finding the root for the following system of simultaneous equations:
\begin{equation}
\label{eq:jrf_unsup_gradient}
\left\{
\begin{split}
& E_{\tilde{p}(\tilde{x})}\left[\nabla_\theta logp_\theta(\tilde{x})\right]
=E_{\tilde{p}(\tilde{x})}\left[\nabla_\theta u_\theta(\tilde{x})\right]-E_{p_\theta(x)}\left[\nabla_\theta u_\theta(x)\right]=0\\
&E_{p_\theta(x)}\left[ \nabla_\phi logq_\phi(x)\right]
=E_{p_\theta(x) q_\phi(h|x)}\left[ \nabla_\phi logq_\phi(x,h)\right]=0\\
\end{split}
\right.
\end{equation}
It can be shown that Eq.(\ref{eq:jrf_unsup_gradient}) exactly follows the form of Eq.(\ref{eq:SA}), so that we can apply the SA algorithm to find its root and thus solve the optimization problem Eq.(\ref{eq:jrf_unsup_obj}). 
The SA algorithm with multiple moves is shown in Algorithm \ref{alg:learning-NRF-IAG}.

\begin{algorithm}[tb]
	\caption{Learning NRFs with inclusive auxiliary generators}
	\label{alg:learning-NRF-IAG}
	\begin{algorithmic}
		\REPEAT
		\STATE \underline{Monte Carlo sampling:}
		Draw a unsupervised minibatch $\mathcal{U} \sim \tilde{p}(\tilde{x}) p_\theta(x) q_\phi(h|x)$;
		
		\STATE \underline{SA updating:}
		
		Update $\theta$ by ascending:		
		$\frac{1}{|\mathcal{M}|} \sum_{(\tilde{x},x,h) \sim \mathcal{M}}
		\left[ \nabla_\theta u_\theta(\tilde{x}) - \nabla_\theta u_\theta(x) \right] $
		
		Update $\phi$ by ascending:
		$
		\frac{1}{|\mathcal{M}|} \sum_{(\tilde{x},x,h) \sim \mathcal{M}}
		\nabla_\phi logq_\phi(x,h) $
		
		\UNTIL{convergence}
	\end{algorithmic}
\end{algorithm}

\begin{prop}
	If Eq.(\ref{eq:jrf_unsup_gradient}) is solvable, then we can apply the SA algorithm to find its root.
\end{prop}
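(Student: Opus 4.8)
The plan is to mirror the recasting used in the proofs of the two preceding propositions (wake-sleep and JSA): I will exhibit an explicit choice of $\lambda$, $z$, $p(z;\lambda)$ and $F(z;\lambda)$ under which the system Eq.(\ref{eq:jrf_unsup_gradient}) is literally an instance of the root-finding form $f(\lambda) = E_{z \sim p(\cdot;\lambda)}[F(z;\lambda)] = 0$ of Eq.(\ref{eq:SA}), after which the SA algorithm (and its multiple-move realization in Algorithm~\ref{alg:learning-NRF-IAG}) applies directly. Concretely, I set $\lambda \triangleq (\theta,\phi)^T$, bundle the three sampled quantities into $z \triangleq (\tilde{x}, x, h)^T$, take the sampling distribution to be the product appearing in the Monte Carlo step of Algorithm~\ref{alg:learning-NRF-IAG}, $p(z;\lambda) \triangleq \tilde{p}(\tilde{x})\, p_\theta(x)\, q_\phi(h|x)$, and define
\begin{displaymath}
F(z;\lambda) \triangleq \left( \begin{array}{c} \nabla_\theta u_\theta(\tilde{x}) - \nabla_\theta u_\theta(x) \\ \nabla_\phi log q_\phi(x,h) \end{array} \right).
\end{displaymath}

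Next I would verify that $E_{p(z;\lambda)}[F(z;\lambda)]$ reproduces the left-hand side of Eq.(\ref{eq:jrf_unsup_gradient}) block by block. For the $\theta$-block, since $\nabla_\theta u_\theta(\tilde{x})$ depends only on $\tilde{x}$ and $\nabla_\theta u_\theta(x)$ only on $x$, marginalizing out the irrelevant coordinates leaves $E_{\tilde{p}(\tilde{x})}[\nabla_\theta u_\theta(\tilde{x})] - E_{p_\theta(x)}[\nabla_\theta u_\theta(x)]$, which by the random-field log-likelihood gradient identity for the model Eq.(\ref{eq:unsup-RF}) equals $E_{\tilde{p}(\tilde{x})}[\nabla_\theta log p_\theta(\tilde{x})]$, exactly the first line. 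For the $\phi$-block, $\nabla_\phi log q_\phi(x,h)$ does not involve $\tilde{x}$, so integrating out $\tilde{x}$ gives $E_{p_\theta(x) q_\phi(h|x)}[\nabla_\phi log q_\phi(x,h)]$; decomposing $log q_\phi(x,h) = log q_\phi(x) + log q_\phi(h|x)$ and applying the Fisher Equality to the conditional score $\nabla_\phi log q_\phi(h|x)$ under $q_\phi(h|x)$ collapses the inner expectation to $\nabla_\phi log q_\phi(x)$, yielding $E_{p_\theta(x)}[\nabla_\phi log q_\phi(x)]$ and matching the second line. This confirms both components and completes the formal recasting, which is the same routine step that closes the earlier two propositions.

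The routine algebra above is not where the real difficulty lies. The main obstacle is the sampling requirement hidden in the phrase ``apply the SA algorithm'': one must construct a Markov transition kernel admitting $p(z;\lambda) = \tilde{p}(\tilde{x})\, p_\theta(x)\, q_\phi(h|x)$ as its invariant distribution. Because this target factorizes, two of its three blocks are harmless — $\tilde{x}$ is drawn trivially from the empirical data, and, given $x$, the code $h$ is drawn exactly from the tractable directed inference model $q_\phi(h|x)$. The genuinely hard block is $x \sim p_\theta(x)$, i.e. sampling from the target random field itself, whose density is known only up to the intractable normalizer $Z(\theta)$, so that ancestral sampling is unavailable and one must resort to MCMC (e.g. Langevin or Metropolis dynamics targeting $p_\theta(x)$, optionally exploiting the generator $q_\phi$ to form proposals). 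I would therefore complete the argument by composing such an MCMC kernel on $x$ with the independent resampling of $\tilde{x}$ and the exact draw of $h$, observing that the composite kernel preserves $p(z;\lambda)$ and that the multiple-move refinement of Algorithm~\ref{alg:learning-NRF-IAG} is precisely what cushions the slow mixing of this kernel. This is the mirror image of the JSA situation: there the intractable object was the posterior $p_\theta(h|x)$, handled by Metropolis-independence sampling with $q_\phi$ as proposal, whereas here the inference model is sampled exactly and the entire difficulty is relocated to drawing from the undirected model.
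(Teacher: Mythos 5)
Your proposal is correct and takes essentially the same approach as the paper: the recasting with $\lambda \triangleq (\theta,\phi)^T$, $z \triangleq (\tilde{x},x,h)$, $p(z;\lambda) \triangleq \tilde{p}(\tilde{x})\,p_\theta(x)\,q_\phi(h|x)$ and the two-block $F(z;\lambda)$ is exactly the paper's proof. Your additional block-by-block verification (using Fisher Equality on the conditional score) and your discussion of constructing the MCMC kernel for the intractable block $x \sim p_\theta(x)$ are sound elaborations of points the paper states in Eq.~(\ref{eq:jrf_unsup_gradient}) itself and in the text following the proposition, respectively.
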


\begin{proof}
	This can be readily shown by recasting  Eq.(\ref{eq:jrf_unsup_gradient}) in the form of $f(\lambda) = 0$, with $\lambda \triangleq (\theta, \phi)^T$, $z \triangleq (\tilde{x}, x, h)$, $p(z; \lambda) \triangleq \tilde{p}(\tilde{x}) p_\theta(x) q_\phi(h|x)$, and
	
	\begin{displaymath}
	F(z; \lambda) \triangleq  \left( \begin{array}{c}
	\nabla_\theta u_\theta(\tilde{x}) - \nabla_\theta u_\theta(x)\\
	\nabla_\phi logq_\phi(x,h)
	\end{array} \right).
	\end{displaymath}
	According to Theorem \ref{theorem:SA}, the SA algorithm converges to a fixed point of Eq.(\ref{eq:jrf_unsup_gradient}).
\end{proof}

The inclusive generator can also be implemented as an autoregressive model, $q_\phi(x)$, like an LSTM-RNN \cite{asru}.
In this case, the learning algorithm is reduced to finding the root for the following system of simultaneous equations:
\begin{equation}
\label{eq:jrf_unsup_gradient2}
\left\{
\begin{split}
& E_{\tilde{p}(\tilde{x})}\left[\nabla_\theta logp_\theta(\tilde{x})\right]
=E_{\tilde{p}(\tilde{x})}\left[\nabla_\theta u_\theta(\tilde{x})\right]-E_{p_\theta(x)}\left[\nabla_\theta u_\theta(x)\right]=0\\
&E_{p_\theta(x)}\left[ \nabla_\phi logq_\phi(x)\right]=0\\
\end{split}
\right.
\end{equation}

To generate sample from the target random field $p_\theta(x)$, the auxiliary directed generator serves as the proposal for constructing the MCMC operator for the target random field, for which there are a number of choices. 
Metropolis Independence Sampling (MIS) is used in \cite{asru} which is appropriate for handling discrete observations, while SGLD/SGHMC is used in \cite{song2018learning} which is more efficient than MIS when handling continuous observations.

\subsubsection{Noise Contrastive Estimation (NCE) Learning}

Noise-contrastive estimation (NCE) is proposed in \cite{nce} for learning unnormalized statistical models.
Its basic idea is ``learning by comparison'', i.e. to perform nonlinear logistic regression to discriminate between data samples drawn from the data distribution $p_0(x)$ and noise samples drawn from a known noise distribution $p_n(x)$.
An advantage of NCE is that the normalization constants can be treated as the normal parameters and updated together with the model parameters.

To apply NCE to estimate the general random field defined in \ref{eq:unsup-RF}, we treat the logarithmic normalization constant $\log Z(\theta)$ as a parameter $c$ and rewrite \eqref{eq:unsup-RF} in the following form:
\begin{equation}\label{eq:trf-nce}
p_{\hat\theta}(x) = \exp\left[  \hat u_{\hat\theta}(x) \right] 
\end{equation}
Here $\hat u_{\hat\theta}(x) = u_\theta(x) - c$,
and $\hat\theta = (\theta, c)$ consists of the parameters of the potential function and the log normalization constant, which can be estimated together in NCE.
There are three distributions involved in NCE -- the true but unknown data distribution denoted by $p_0(x)$, the model distribution $p_{\hat\theta}(x)$ in \eqref{eq:trf-nce} and a fixed, known noise distribution denoted by $p_n(x)$.

Consider the binary classification of a sample $x$ coming from two classes -- from the data distribution ($C=0$) and from the noise distribution ($C=1$), where $C$ is the class label.
Assume that the ratio between the prior probabilities is $1:\nu$, and the class-conditional probability for $C=0$ is modeled by $p_{\hat\theta}(x)$.
Then the posterior probabilities can be calculated respectively as follows:
\begin{align}
P(C=0|x; \hat \theta) &= \frac{p_{\hat\theta}(x)}{p_{\hat\theta}(x) + \nu p_n(x)}  \label{eq:pc0} \\
P(C=1|x; \hat \theta) &= 1 - P(C=0|x; \hat \theta)  \label{eq:pc1}
\end{align}
NCE estimates the model distribution by maximizing the following conditional log-likelihood:
\begin{equation} \label{eq:NCE-obj}
\mathcal{J}(\hat \theta) = E_{x \sim p_0(x)} \left[ \log P(C=0|x;\hat\theta)\right]  + 
\nu E_{x \sim p_n(x)} \left[  \log P(C=1|x;\hat\theta) \right] 
\end{equation}
$\mathcal{J}(\hat\theta)$ is the summation of two expectations.
The first is the expectation w.r.t. the data distribution $p_0(x)$, which can be approximated by randomly selecting sentences from the training set.
The second is the expectation w.r.t. the noise distribution $p_n(x)$, which can be computed by drawing sentences from the noise distribution itself.

\begin{algorithm}[tb]
	\caption{\textbf{NCE learning}, represented as the SA algorithm with multiple moves}
	\label{alg:NCE}
	\begin{algorithmic}
		\REPEAT
		\STATE \underline{Monte Carlo sampling:}
		Draw an empirical minibatch $\mathcal{E} \sim p_0(x)$ and a noise minibatch $\mathcal{N} \sim p_n(x)$, satisfying $\nu = |\mathcal{N}|/|\mathcal{E}|$;
		\STATE \underline{SA updating:}
		Update $\hat\theta$ by ascending:\\
		$\frac{1}{|\mathcal{E}|} \sum_{x \sim \mathcal{E}}
		P(C=1|x;\hat\theta) \nabla_{\hat\theta} \hat u_{\hat\theta}(x) - \frac{\nu}{|\mathcal{N}|} \sum_{x \sim \mathcal{N}}
		P(C=0|x;\hat\theta) \nabla_{\hat\theta} \hat u_{\hat\theta}(x)$;
		\UNTIL{convergence}
	\end{algorithmic}
\end{algorithm}

Setting to zeros the gradients of $\mathcal{J}(\hat\theta)$ w.r.t. $\hat\theta$, we can apply the SA algorithm, as shown in Algorithm \ref{alg:NCE}, to find its root and thus solve the optimization problem Eq.(\ref{eq:NCE-obj}). The relevant gradients are
\begin{equation} \label{eq:NCE-grad}
\begin{split}
\nabla_{\hat\theta}\log P(C=0|x;\hat\theta) = 
P(C=1|x;\hat\theta) \nabla_{\hat\theta} \hat u_{\hat\theta}(x) \\
\nabla_{\hat\theta}\log P(C=1|x;\hat\theta) = 
P(C=0|x;\hat\theta) \nabla_{\hat\theta} \hat u_{\hat\theta}(x)
\end{split}
\end{equation}

It is shown in \cite{nce} that under the ideal situation of infinite amount of data and infinite capacity of $p_{\hat\theta}(x)$, we have the following theorem (Nonparametric estimation).
It is further shown \cite{nce} that the NCE estimator is consistent.
In Appendix \ref{sec:appendix-NCE}, we give an connection between NCE learning and variational estimation of $f$-divergence, which also shows that the data density $p_0(x)$ can be found by maximization of $\mathcal{J}(\hat\theta)$.

\begin{theorem}
	\label{theorem:NCE}
$\mathcal{J}(\hat\theta)$ attains its maximum at $p_{\hat\theta}(x) = p_0(x)$.
There are no other extrema if the noise density $p_n(x)$ is chosen such that it is nonzero whenever $p_0(x)$ is nonzero.
\end{theorem}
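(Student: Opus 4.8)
The plan is to exploit the nonparametric setting. Since the log-normalizer $c$ is a free component of $\hat\theta=(\theta,c)$, the model $p_{\hat\theta}(x)=\exp[u_\theta(x)-c]$ is not constrained to integrate to one, so I would treat $p_{\hat\theta}(x)$ as an arbitrary positive function and maximize $\mathcal{J}(\hat\theta)$ over it pointwise. First I would substitute the posteriors Eq. \ref{eq:pc0}--\ref{eq:pc1} into the objective Eq. \ref{eq:NCE-obj} and write it as a single integral over $\mathcal{X}$, whose integrand at each $x$ depends only on the scalar value $f\triangleq p_{\hat\theta}(x)$:
\[
\mathcal{J}=\int_{\mathcal{X}} \left[ p_0(x)\log\frac{f}{f+\nu p_n(x)} + \nu p_n(x)\log\frac{\nu p_n(x)}{f+\nu p_n(x)} \right] dx.
\]
Because there is no constraint coupling the values of $f$ at different points (this is precisely what a free $c$ buys us), maximizing the integral reduces to maximizing the integrand separately at each $x$.

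Next, for fixed $x$ I would set $a=p_0(x)$ and $b=\nu p_n(x)$ and study $g(f)=a\log\frac{f}{f+b}+b\log\frac{b}{f+b}$ for $f>0$. A short computation gives $g'(f)=\frac{b}{f+b}\left(\frac{a}{f}-1\right)$. When $b>0$ the prefactor $\frac{b}{f+b}$ is strictly positive, so $g'(f)>0$ for $f<a$ and $g'(f)<0$ for $f>a$; hence $g$ has a unique stationary point at $f=a$, which is a strict global maximum. The pointwise optimum is therefore $p_{\hat\theta}(x)=p_0(x)$ at every $x$ with $p_n(x)>0$, and the boundary case $a=0<b$ gives a maximum at $f=0=p_0(x)$, again consistent. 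Assembling these pointwise optima yields $p_{\hat\theta}=p_0$ as the maximizer of $\mathcal{J}(\hat\theta)$, establishing the first claim; I would note that this recovered optimum is automatically normalized, consistent with being a legitimate density even though normalization was never imposed.

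For the ``no other extrema'' claim I would lean directly on the hypothesis that $p_n(x)>0$ whenever $p_0(x)>0$. This guarantees $b>0$ on the support of $p_0$, so the pointwise problem is strictly unimodal there and $f=a$ is the only stationary value, leaving no alternative function achieving a stationary point of $\mathcal{J}$. The main obstacle, and the place to be careful, is exactly this uniqueness step: if $p_n(x)=0$ at some point with $p_0(x)>0$, then $b=0$ there and $g(f)=a\log 1=0$ is flat in $f$, so $p_{\hat\theta}(x)$ is completely undetermined at that point and the maximizer ceases to be unique. The support condition on $p_n$ is precisely what rules out this degeneracy, so the crux of the proof is verifying that this condition makes the pointwise objective strictly unimodal everywhere it matters.
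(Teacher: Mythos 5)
Your proof is correct, and it takes a genuinely different route from the one the paper relies on. The paper itself does not prove Theorem \ref{theorem:NCE} in the main text (it defers to the original NCE paper) and instead offers, in Appendix \ref{sec:appendix-NCE}, a proof-by-connection: the NCE objective Eq. \ref{eq:NCE-obj} is shown to equal (up to the constant $\nu\log\nu$) the variational lower bound Eq. \ref{eq:f-gan-bound} on the $f$-divergence $D_f[p_0\|p_n]$ for the particular choice $f(u)=u\log u-(u+\nu)\log(u+\nu)$, with $T$ parameterized as $T=f'\bigl(p_{\hat\theta}/p_n\bigr)$; optimality then follows from the tightness condition Eq. \ref{eq:f-gan-optimal-T}, since $f'$ is injective, so $T^*=f'\bigl(p_0/p_n\bigr)$ forces $p_{\hat\theta}=p_0$. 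Your argument instead maximizes the functional directly and pointwise: you correctly observe that the free log-normalizer $c$ (together with infinite capacity of $u_\theta$) makes $p_{\hat\theta}$ an arbitrary positive function with no normalization coupling across $x$, compute $g'(f)=\frac{b}{f+b}\bigl(\frac{a}{f}-1\bigr)$, and conclude strict unimodality at $f=a$ wherever $b=\nu p_n(x)>0$. This is essentially the original Gutmann--Hyv\"arinen nonparametric argument. What each approach buys: yours is elementary and self-contained, makes the role of the support condition completely explicit (your analysis of the degenerate case $b=0$, where the integrand is flat in $f$, is exactly the failure mode the hypothesis excludes, and also where the ``no other extrema'' clause would break), and exhibits the self-normalization property as a byproduct; the paper's route is less explicit about uniqueness but situates NCE inside the VDM/$f$-GAN framework of Eq. \ref{eq:f-gan-obj1}, reusing machinery already developed and exposing NCE as variational estimation of $D_f[p_0\|p_n]$ with a fixed contrast distribution. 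Note, though, that the two are not as far apart as they look: the Fenchel-duality step $f(u)=\sup_t\{tu-f^\dagger(t)\}$ underlying Eq. \ref{eq:f-gan-bound} is itself a pointwise scalar maximization, so your calculus on $g(f)$ is the same computation carried out without the convex-conjugate dressing. One small imprecision to tidy: pointwise freedom comes from the infinite-capacity assumption on $u_\theta$, while the free $c$ is what removes the normalization constraint; your phrasing attributes both to $c$, but both ingredients are needed and both are part of the theorem's nonparametric hypothesis, so the substance of the argument is unaffected.
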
	

Note that there exist two problems in applying NCE learning.
First, reliable NCE needs a large $\nu$, especially when the noise distribution is not close to the data distribution.
And the time and memory cost for gradient calculation are almost linearly increased with $\nu$.
Second, the expectation w.r.t. the data distribution in \eqref{eq:NCE-obj} is approximated by the expectation w.r.t. the empirical distribution (namely the training set), which is rather sparse in high-dimensionality tasks.
The model estimated by NCE is thus easily overfitted to the empirical distribution.
Dynamic noise-contrastive estimation (DNCE) is proposed in \cite{wang2018improved} to gracefully address the above two problems.

\section{Conclusion}

Deep generative models combines the representation powers of both graphical models and neural networks, and would play an important role in future artificial intelligence.
Learning with deep generative models, whether directed or undirected, is developing rapidly and making great advancement.
This review separates model definition and model learning, with more emphasis on reviewing, differentiating and connecting different existing learning algorithms. Systematic understanding of existing algorithms would be helpful for future research.
There are still some open questions, which are discussed in the previous sections and deserve more efforts.
\begin{itemize}
\item
Stabilize and speedup training, e.g. by exploring different training criteria, different regularizations, etc.
\item
Handling of discrete latent variables.
\item
Handling of discrete observations.
\item
Handling of sequential observations.
\end{itemize}

\section{Appendix A: The plugging trick: turning single-objective into multi-objective optimization} \label{sec:appendix-plugging}

Recently, there are increasing practices in turning (intractable) single-objective optimization into (tractable) multi-objective optimization through plugging \citep{mescheder2017adversarial,Pu2017SymmetricVA}.
For example, for an objective function which involves an intractable likelihood ratio term, plugging the likelihood ratio estimator given by adversarial learning would yield a tractable two-objective optimization problem.
In the following, we present this plugging trick with some caution.

Suppose that the target optimization problem is given by
\begin{equation} \label{eq:plugging-single-obj}
	\max_{\phi} F(\phi, V(\phi)),
\end{equation}
where we use $V(\phi)$ to collect the intractable term which appears in the definition of the target objective function. 
If we can represent $V(\phi)$ by using the solution from another optimization problem,
\begin{equation} \label{eq:plugging}
V(\phi) = v(\phi, \psi_\phi^*), \psi_\phi^* = \argmax_{\psi} f(\phi, \psi),
\end{equation}
Then we can turn the single-objective optimization problem Eq. \ref{eq:plugging-single-obj} into the following multi-objective problem:
\begin{equation} \label{eq:plugging-two-obj}
\left\{
\begin{split}
&\max_{\phi} F(\phi, v(\phi, \psi)) \\
&\max_{\psi} f(\phi,\psi)
\end{split}
\right.
\end{equation}

Through plugging, we transform the optimization Eq. \ref{eq:plugging-single-obj} into Eq. \ref{eq:plugging-two-obj}. 
Plugging itself only guarantees the equality of the objectives, namely 
\begin{displaymath}
\left. F(\phi, v(\phi, \psi)) \right|_{\psi=\psi^*_\phi} =  F(\phi, V(\phi))
\end{displaymath}
for the optimal $\psi^*_\phi$, determined by Eq. \ref{eq:plugging}.
\textbf{For the plugging trick to be valid}, we need to additionally check the gradients
\begin{displaymath}
\left. \left\lbrace \nabla_{\phi} F(\phi, v(\phi, \psi)) \right\rbrace \right|_{\psi=\psi^*_\phi} = \nabla_{\phi} F(\phi, V(\phi)),
\end{displaymath}
because in optimizing the first line in Eq. \ref{eq:plugging-two-obj}, we ignore the implicit dependence of $v(\phi, \psi)$ on $\phi$, i.e. we take the gradient of $F(\phi, v(\phi, \psi))$ w.r.t. $\phi$ while treating $\psi$ as constant.
This check is equivalent to examine that the gradient of $F$ due to such ignorance is zero: 
\begin{displaymath}
\left. \frac{\partial F(\phi, v(\phi, \psi))}{\partial \psi}\right| _{\psi=\psi^*_\phi}  \frac{\partial \psi^*_\phi}{\partial \phi}  = 0.
\end{displaymath}

\section{Appendix B: Connection between NCE and Variational Estimation of $f$-divergence\footnote{Provided by Bin Wang.}} \label{sec:appendix-NCE}

Denote by $p_m(x; \psi)$ the model distribution needed to be estimated\footnote{Here $p_{\hat\theta}(x)$ is denoted as $p_m(x; \psi)$, to make the connection explicit.}.
Denote by $p_n(x)$ a known noise distribution, from which it is easy to draw samples.
Then we can rewrite Eq. \ref{eq:f-gan-bound} by replacing $p_\theta$ to $p_n$:
\begin{equation}\label{eq:f-div-nce}
  D_f[p_0||p_n] \geq \sup_{T \in \mathcal{T}} \left(E_{p_0(x)} \left[T(x;\psi)\right] - E_{p_n(x)} \left[f^\dagger(T(x;\psi))\right]\right)
\end{equation}
where $T(x;\psi)$ with parameter $\psi$ and the $f$-function are defined as follow:
\begin{displaymath}
\begin{split}
f(u) &\triangleq u\log u - (u + \nu) \log (u + \nu) \\
f'(u) &= \log \frac{u}{u+\nu} \\
f^\dagger(t) &= \sup_{u} \{ ut - f(u) \} = \nu \log \nu - \nu \log (1 - e^t) \\
T(x; \psi) &\triangleq f'\left( \frac{p_m(x; \psi)}{p_n(x)} \right) = \log \frac{p_m(x; \psi)}{p_m(x; \psi) + \nu p_n(x)}
\end{split}\label{eq:f-div-nce-f}
\end{displaymath}

According to Eq. \ref{eq:f-div-nce}, $D_f[p_0||p_n]$ can be estimated by maximizing the following objective function (the variational lower bound) w.r.t. $\psi$:
\begin{equation}\label{eq:f-div-nce-obj}
\begin{split}
  \mathcal{F}(\psi) &= E_{p_0(x)} \left[T(x; \psi)\right] - E_{p_n(x)} \left[f^\dagger(T(x; \psi))\right] \\
          &= E_{p_0(x)} \log \frac{p_m(x; \psi)}{p_m(x; \psi) + \nu p_n(x)} + \nu E_{p_n(x)} \log \frac{\nu p_n(x)}{p_m(x; \psi) + \nu p_n(x)} + \nu \log \nu \\
\end{split}
\end{equation}
Ignoring the constant item $\nu \log \nu$, Eq. \ref{eq:f-div-nce-obj} is the objective function of NCE.

According to Eq. \ref{eq:f-gan-optimal-T}, the optimal $p_m(x; \psi)$ satisfies $p^*_m = p_0$.

\bibliography{ozj}
\bibliographystyle{iclr2016_workshop}

\begin{landscape}
	\begin{table}[t]
		\caption{Summary of learning algorithms for directed generative models.
			We organize the summary according to the objection functions used in joint training of the target model $p_\theta(x,h)$ or $p_\theta(x)$, and the auxiliary model $q_\phi(h|x)$ or $V_\psi(x)$. 
			Shorthands: ML (maximum likelihood), V-LB (variational lower bound), IW-LB (importance-weighting lower bound),
			$*$ (using the reparameterization trick for continuous $h$, which is not feasible for discrete $h$), $\diamond$ (using variance-reduction techniques for discrete $h$), $\times$ (not applicable).
		}
		\label{table:directed-learning-summary}
		\begin{center}
			\begin{tabular}{c|c|c|c|c}
				-
				&\makecell{ ML\\ $\max_\theta E_{\tilde{p}(x)} log p_\theta(x)$ }
				&\makecell{ V-LB\\$\max_\theta E_{\tilde{p}(x)} \mathcal{L}(x;\theta,\phi)$}
				&\makecell{ IW-LB\\$\max_\theta E_{\tilde{p}(x)} \mathcal{I}_K(x;\theta,\phi)$}
				&\makecell{$f$-divergence-LB\\$min_\theta \mathcal{F}(\theta,\psi)$}\\
				\hline
				\hline	
				\makecell{ V-LB\\$\max_\phi E_{\tilde{p}(x)} \mathcal{L}(x;\theta,\phi)$\\$\Leftrightarrow \min_\phi E_{\tilde{p}(x)} KL\left[ q_\phi(h|x) || p_\theta(h|x) \right]$} 
				&$\times$ 
				&\makecell{VAE$^*$ \\NVIL$^{\diamond}$\\MuProp} &$\times$
				&$\times$\\
				\hline
				\makecell{ IW-LB\\$\max_\phi E_{\tilde{p}(x)} \mathcal{I}_K(x;\theta,\phi)$ }
				&$\times$ 
				&$\times$ 
				&IWAE$^*$ 
				&$\times$\\
				\hline
				$\min_\phi E_{\tilde{p}(x)} KL\left[ p_\theta(h|x) || q_\phi(h|x) \right]$ IS
				&$\times$ 
				&$\times$ 
				&RWS 
				&$\times$\\
				\hline
				$\min_\phi E_{\tilde{p}(x)} KL\left[ p_\theta(h|x) || q_\phi(h|x) \right]$ MIS
				&JSA 
				&$\times$ 
				&$\times$ 
				&$\times$\\
				\hline	
				\makecell{sleep phase $\phi$-update\\$\min_\phi E_{p_\theta(x)} KL\left[ p_\theta(h|x) || q_\phi(h|x) \right]$}
				& 
				&WS 
				&RWS
				&$\times$\\
				\hline	
				\makecell{$f$-divergence-LB\\$min_\psi \mathcal{F}(\theta,\psi)$}
				&$\times$ 
				&$\times$
				&$\times$
				&Adversarial Learning\\							
			\end{tabular}
		\end{center}
		Note:
		\begin{itemize}
			\item
			Since IW-LB is a tighter lower bound than V-LB, the crossing use of the two bounds in optimizing $\theta$ and $\phi$ does not make sense (which is marked with $\times$), though possible.
			\item
			For $\min_\phi E_{\tilde{p}(x)} KL\left[ p_\theta(h|x) || q_\phi(h|x) \right]$, you could choose either IS or MIS.
			\item
			If we have obtained samples from $p_\theta(h|x)$ through MIS accept/reject, then it would be unnecessary to update $\theta$ according to V-LB or update $\phi$ according to V-LB or IW-LB.
			\item
			It is reported in \cite{bornschein2014reweighted} that combining wake and sleep phase $\phi$-updates generally gives the best
			results. It is possible to add sleep phase $\phi$-update to JSA.
		\end{itemize}
	\end{table}
\end{landscape}
\end{document}